\newcommand{\nrm}[1]{\left\Vert #1 \right\Vert}
\newcommand{\R}{\mathbb{R}}
\newcommand{\N}{\mathbb{N}}
\newcommand{\paren}[1]{\left( #1 \right)}
\newcommand{\set}[1]{\left\{ #1 \right\}}
\newcommand{\beq}{\begin{eqnarray*}}
\newcommand{\eeq}{\end{eqnarray*}}
\newcommand{\beqn}{\begin{eqnarray}}
\newcommand{\eeqn}{\end{eqnarray}}
\newcommand{\ben}{\begin{enumerate}}
\newcommand{\een}{\end{enumerate}}
\newcommand{\bit}{\begin{itemize}}
\newcommand{\eit}{\end{itemize}}
\newcommand{\hide}[1]{}
\newcommand{\eps}{\varepsilon}
\newcommand{\ceil}[1]{\ensuremath{\left\lceil#1\right\rceil}}
\newcommand{\floor}[1]{\ensuremath{\left\lfloor#1\right\rfloor}}
\newcommand{\diam}{\operatorname{diam}}
\newcommand{\ddim}{\operatorname{ddim}}
\newcommand{\gn}{\, | \,}
\newcommand{\calA}{\mathcal{A}}
\newcommand{\calE}{\mathcal{E}}
\newcommand{\calL}{\mathcal{L}}
\renewcommand{\P}{\mathbb{P}}
\newcommand{\E}{\mathbb{E}}
\newcommand{\adv}{\gamma}
\newcommand{\dev}{\eta}
\newcommand{\tlearn}{T_{\calA}}
\newcommand{\teval}{T_{\calE}}
\newcommand{\F}{\mathcal{F}}
\newcommand{\C}{\mathcal{C}}
\newcommand{\X}{\mathcal{X}}
\newcommand{\Y}{\mathcal{Y}}
\newcommand{\K}{\mathcal{K}}
\DeclarePairedDelimiter\abs{\lvert}{\rvert}%
\newcommand{\bigO}[1]{{O}\left(#1\right)}
\newcommand{\uf}[1]{\frac{1}{#1}}
\newcommand{\algname}[1]{{\tt#1}}
\newcommand{\ind}{\mathbb{I}}
\newcommand{\bxi}{\boldsymbol{\xi}}
\newtheorem{theorem}{Theorem}
\newtheorem{definition}[theorem]{Definition}
\newtheorem{lemma}[theorem]{Lemma}
\newtheorem{corollary}[theorem]{Corollary}
\title{Sample Compression for Real-Valued Learners}
\author{
  Steve Hanneke \\
  Princeton, NJ \\
  \texttt{steve.hanneke@gmail.com}
  \And
  Aryeh Kontorovich \\
  Ben-Gurion University\\
  \texttt{karyeh@bgu.sc.il} 
  \And
  Menachem Sadigurschi \\
  Ben-Gurion University \\
  \texttt{sadigurs@post.bgu.ac.il}
}
\begin{document}

\maketitle

\begin{abstract}
  We give an algorithmically efficient version
  of the learner-to-compression scheme conversion in
  Moran and Yehudayoff (2016).
  In extending this technique to real-valued hypotheses,
  we also obtain an efficient
  regression-to-bounded sample compression
  converter.
  To our knowledge, this is the first general 
  compressed regression result 
  (regardless of efficiency or boundedness)
  guaranteeing uniform approximate reconstruction.
  Along the way, we develop
  a generic procedure for constructing weak real-valued learners
  out of abstract regressors; this may be of independent interest.
  In particular, this
  result
  sheds new light on
  an 
  open question of H. Simon (1997).
  We show applications to two regression problems:
  learning Lipschitz and bounded-variation functions.
\end{abstract}

\section{Introduction}
\label{sec:intro}

\begin{sloppypar}
Sample compression 
is a natural learning strategy,
whereby
the learner
seeks to retain a small subset of the training examples,
which (if successful)
may then be decoded as
a hypothesis with low empirical error.
Overfitting is controlled by the size of this
learner-selected
``compression set''.
Part of a
more general
{\em Occam learning} paradigm,
such results are commonly summarized by ``compression implies learning''.
A fundamental question,
posed by \citet{Littlestone86relatingdata},
concerns the reverse implication:
Can every learner be converted into a sample compression scheme?
Or, in a more quantitative formulation:
Does every VC class admit a constant-size
sample compression scheme?
A series of partial results
\citep{floyd1989space,helmbold1992learning,DBLP:journals/ml/FloydW95,ben1998combinatorial,DBLP:journals/jmlr/KuzminW07,DBLP:journals/jcss/RubinsteinBR09,DBLP:journals/jmlr/RubinsteinR12,MR3047077,livni2013honest,moran2017teaching}
culminated in \citet{DBLP:journals/jacm/MoranY16}
which resolved the latter question\footnote{
  The refined conjecture of \citet{Littlestone86relatingdata}, that any concept class with VC-dimension $d$
  admits a compression scheme of size $O(d)$, remains open.}.
\end{sloppypar}

\citeauthor{DBLP:journals/jacm/MoranY16}'s
solution involved a clever use of von Neumann's minimax theorem,
which allows one to
make the leap
from
the existence of a weak learner uniformly over all {\em distributions on examples}
to
the existence of a {\em distribution on weak hypotheses} under which they achieve
a certain performance simultaneously over all of the examples.
Although their paper can be understood without any knowledge of boosting,
\citeauthor{DBLP:journals/jacm/MoranY16} note the well-known connection between boosting
and compression. Indeed, boosting may be used to obtain a constructive proof
of the minimax theorem
\citep{freund1996game,MR1729311}
--- 
and this connection was what motivated us to
seek an efficient algorithm implementing
\citeauthor{DBLP:journals/jacm/MoranY16}'s
existence proof.
Having obtained an efficient conversion procedure
from consistent PAC learners to bounded-size sample
compression schemes, we turned our attention to the case of real-valued hypotheses.
It turned out that a virtually identical boosting framework
could be made to work for this case as well, although a novel analysis was required.

\paragraph{Our contribution.}
\begin{sloppypar}
Our point of departure is the simple but powerful observation
\citep{MR2920188}
that many boosting algorithms (e.g., AdaBoost, $\alpha$-Boost) are capable of outputting 
a family of $O(\log(m)/\adv^2)$ hypotheses
such that not only does their (weighted) majority vote yield a sample-consistent
classifier, but in fact a $\approx(\frac12+\adv)$ super-majority does as well.
This fact implies that after boosting, we can sub-sample a constant (i.e., independent of sample size $m$)
number of classifiers and thereby efficiently recover the sample compression bounds
of 
\citet{DBLP:journals/jacm/MoranY16}.
\end{sloppypar}

Our chief technical contribution, however, is in the real-valued case.
As we discuss below, extending the boosting framework from classification to regression
presents a host of technical challenges, and there is currently no off-the-shelf
general-purpose analogue of AdaBoost for real-valued hypotheses.
One of our insights is to impose distinct error metrics on the weak and strong learners:
a ``stronger'' one on the latter and a ``weaker'' one on the former.
This allows us to achieve
two
goals simultaneously:
\begin{itemize}
\hide{
\item[(a)]
\citet{kegl2003robust}'s
MedBoost
---
a variant of
the classical $\alpha$-Boost algorithm
---
achieves a weak-to-strong conversion, for the notions we define. The analysis is standard.
}
\item[(a)]
  We give apparently the first generic construction for our weak learner,
  demonstrating that the object is natural and abundantly available. This is in contrast with
  many previous proposed weak regressors, whose stringent or exotic definitions made them
  unwieldy
  to construct or verify as such. The construction is novel and may be of independent interest.
\item[(b)]
  We show that the output of a certain real-valued boosting algorithm may be sparsified so as to yield a constant size sample compression
  analogue of the
  \citeauthor{DBLP:journals/jacm/MoranY16} result for classification.
  This gives the first general constant-size sample compression scheme having uniform approximation guarantees on the data. 
\end{itemize}


\section{Definitions and notation}
\label{sec:defnot}

We will write
$[k]:=\set{1,\ldots,k}$.
An {\em instance space} is an abstract set $\X$.
For a concept class $\C\subset\set{0,1}^\X$,
if say that $\C$ {\em shatters} a set $\set{x_1,\ldots,x_k}\subset\X$
if
$$\C(S) = \set{ (f(x_1),f(x_2),\ldots,f(x_k)) : f\in\C}
=
\set{0,1}^k
.
$$
The VC-dimension $d=d_\C$ of $\C$ is the size of the largest shattered set (or $\infty$ if $\C$ shatters
sets of arbitrary size) \citep{MR0288823}.
When the roles of $\X$ and $\C$ are exchanged ---
that is, an $x\in\X$ acts on $f\in\C$
via
$x(f)=f(x)$, ---
we refer to $\X=\C^*$ as the {\em dual} class of $\C$.
Its VC-dimension is then $d^*=d^*_\C:=d_{\C^*}$, 
and referred to as the \emph{dual VC dimension}.
\citet{MR723955} showed that $d^*\le2^{d+1}$.

For $\F\subset\R^\X$ and $t>0$,
we say that $\F$ $t$-shatters
a set $\set{x_1,\ldots,x_k}\subset\X$
if
$$\F(S) = \set{ (f(x_1),f(x_2),\ldots,f(x_k)) : f\in\F}\subseteq\R^k$$
contains the translated cube $\set{-t,t}^k+r$ for some $r\in\R^k$.
The $t$-fat-shattering dimension $d(t)=d_\F(t)$
is the size of the largest $t$-shattered set (possibly $\infty$) \citep{alon97scalesensitive}.
Again, the roles of $\X$ and $\F$ may be switched,
in which case $\X=\F^*$ becomes the dual class of $\F$.
Its $t$-fat-shattering dimension is then $d^*(t)$,
and \citeauthor{MR723955}'s argument shows that $d^*(t)\le2^{d(t)+1}$.

A {\em sample compression scheme} $(\kappa,\rho)$ for a hypothesis
class $\F\subset\Y^\X$
is defined as follows.
A $k$-{\em compression} function $\kappa$
maps sequences $((x_1,y_1),\ldots,(x_m,y_m))\in\bigcup_{\ell\ge1}(\X\times\Y)^\ell$
to elements in
$\K=\bigcup_{\ell\le k'}(\X\times\Y)^\ell
\times
\bigcup_{\ell\le k''}\set{0,1}^\ell$,
where $k'+k''\le k$.
A {\em reconstruction} is a function $\rho:\K\to\Y^\X$.
We say that $(\kappa,\rho)$ is a $k$-size sample compression
scheme for $\F$
if
$\kappa$ is a $k$-compression and
for all
$h^*\in\F$
and all
$S=((x_1,h^{*}(x_1)),\ldots,(x_m,h^{*}(y_m)))$,
we have
$\hat h:=\rho(\kappa(S))$
satisfies $\hat h(x_i)=h^*(x_i)$ for all $i\in[m]$.

For real-valued functions, 
we say it is a \emph{uniformly $\eps$-approximate} compression scheme if 
\[
\max_{1 \leq i \leq m} | \hat{h}(x_i) - h^*(x_i) | \leq \eps.
\]


%
%



\section{Main results}
\label{sec:main-res}

Throughout the paper, we implicitly assume that all hypothesis classes are {\em admissible} in the sense
of satisfying mild measure-theoretic conditions, such as those specified in
\citet[Section 10.3.1]{MR876079}
or
\citet[Appendix C]{pollard84}.
We begin with an algorithmically efficient version
of the learner-to-compression scheme conversion in
\citet{DBLP:journals/jacm/MoranY16}:

\begin{theorem}[Efficient compression for classification]
  \label{thm:classification}
  Let $\C$ be a concept class
  over some instance space $\X$
  with VC-dimension $d$, dual VC-dimension $d^*$,
  and suppose that $\calA$ is a (proper, consistent) PAC-learner for $\C$:
  For all $0<\eps,\delta<1/2$, all $f^*\in\C$, and all distributions $D$ over $\X$,
  if $\calA$ receives $m\ge
  m_\C(\eps,\delta)
  $ points $S=\set{x_i}$ drawn
  iid from $D$
  and labeled with $y_i=f^*(x_i)$,
  then
  $\calA$ outputs
an
$\hat f\in\C$
such that
\beq
\P_{S\sim D^m}\paren{ \P_{X\sim D}\paren{\hat f(X)\neq f^*(X)\gn S} >\eps}<\delta.
\eeq
For every such $\calA$,
there is
a randomized sample compression scheme for $\C$ of size
$O(k\log k)$, where $k=O(dd^*)$.
Furthermore,
on a sample of any size $m$,
the compression set may be computed in
expected
time
\beq
O\paren{
  (m + \tlearn(cd))\log m
  +
  m \teval(cd) (d^* + \log m)
},
\eeq
where $\tlearn(\ell)$ is the runtime of $\calA$ to compute $\hat f$
on a sample of size $\ell$,
$\teval(\ell)$ is the runtime required to evaluate $\hat f$ on a single $x\in\X$,
and $c$ is a universal constant.
\end{theorem}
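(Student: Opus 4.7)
The plan is to follow the Moran--Yehudayoff strategy of combining boosting with a sub-sampling step controlled by the dual VC dimension, but to make each step constructive with explicit running time.

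\textbf{Step 1 (Weak learning from $\calA$).} Fix constants $\eps_0,\delta_0 < 1/2$ (say $1/3$) and set $m_0 = m_\C(\eps_0,\delta_0) = O(d)$. For any distribution $D$ on the training sample $S=\{(x_i,y_i)\}_{i=1}^m$, drawing $m_0$ i.i.d.\ examples from $D$ and feeding them to $\calA$ produces, with probability $\ge 1-\delta_0$, a hypothesis with $D$-error at most $\eps_0$. This is our weak learner.

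\textbf{Step 2 ($\alpha$-Boost on $S$).} Run $\alpha$-Boost using the above weak learner for $T=O(\log m)$ rounds, repeating any round whose empirical error exceeds $\eps_0$ (an expected constant number of retries per round by Step~1). By the standard super-majority analysis of $\alpha$-Boost noted in \citet{MR2920188}, the resulting hypotheses $\hat f_1,\dots,\hat f_T$ satisfy, for every $i\in[m]$,
\[
\frac{1}{T}\sum_{t=1}^T \ind\{\hat f_t(x_i)=y_i\}\;\ge\; \tfrac12+\adv
\]
for some universal constant $\adv>0$.

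\textbf{Step 3 (Sub-sampling via dual VC).} Pick $k=O(d^*/\adv^2)$ indices $t_1,\dots,t_k$ independently and uniformly from $[T]$. Define $\phi_i:[T]\to\{0,1\}$ by $\phi_i(t)=\ind\{\hat f_t(x_i)\ne y_i\}$; the family $\{\phi_i:i\in[m]\}$ is a restriction of the dual class of $\C$ and so has VC-dimension at most $d^*$. Uniform convergence for VC classes of dimension $d^*$ yields that, with constant probability, the empirical average of each $\phi_i$ over $\{t_j\}$ is within $\adv/2$ of its true average \emph{simultaneously} for all $i\in[m]$. On this event, a strict majority of the sub-sampled hypotheses agrees with $y_i$ on every $x_i$, so the majority vote is sample-consistent. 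If the event fails, redraw; an expected $O(1)$ trials suffice.

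\textbf{Step 4 (Compression, reconstruction, and runtime).} The compression set consists of the $m_0$ examples fed to $\calA$ in each of the $k$ chosen rounds, so its size is $O(k m_0)=O(dd^*)$; the additional $\log k$ factor enters from a short side string encoding the partition of the stored points into the $k$ training subsets (and their order in the majority vote). The reconstruction function re-runs $\calA$ on each subset and outputs the majority vote of the $k$ reconstructed hypotheses. For runtime: each of the $T=O(\log m)$ boosting rounds costs $O(\tlearn(cd))$ to train and $O(m\,\teval(cd))$ to re-weight, and the sub-sampling step requires $O(m k\,\teval(cd))=O(m d^*\,\teval(cd))$ evaluations to verify/assemble the final vote. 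Summing these matches the bound in the statement.

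The main obstacle is Step~3: establishing that $O(d^*)$ uniformly sub-sampled hypotheses still preserve the super-majority property at \emph{every} training point. This is exactly where the dual VC dimension is forced on us---it is the nontrivial ingredient inherited from \citet{DBLP:journals/jacm/MoranY16}, whereas Steps~1, 2, and 4 are essentially bookkeeping on top of standard boosting. A secondary subtlety is ensuring the retries in Steps~2 and~3 only inflate the expected runtime by constants, which follows because each sub-routine succeeds with constant probability.
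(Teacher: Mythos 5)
Your proposal is correct and follows essentially the same route as the paper: a weak learner built from $O(d)$-size samples fed to $\calA$, boosting for $T = O(\log m)$ rounds to obtain a super-majority margin, sub-sampling $O(d^*)$ of the boosted hypotheses with VC uniform convergence over the dual class, and encoding the training sets of the retained hypotheses as the compression set. The only cosmetic differences are that the paper instantiates the boosting step with \algname{MedBoost} and samples according to its (possibly non-uniform) ensemble weights $\alpha_t'$ via the \algname{Sparsify} routine, whereas you use $\alpha$-Boost with uniform weights and uniform sub-sampling, which is equivalent in the binary case.
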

Although for our purposes the existence of a distribution-free
sample complexity
$m_\C$
is more important than its concrete form,
we may take 
$
m_\C(\eps,\delta)=
O(\frac{d}{\eps}\log\frac{1}{\eps} + \frac{1}{\eps}\log\frac{1}{\delta})$
\citep{MR0474638,MR1072253}, 
known to bound the sample complexity of empirical risk minimization; 
indeed, this loses no generality, as there is a well-known efficient reduction 
from empirical risk minimization to any proper learner having a polynomial sample complexity \citep{pitt1988computational,haussler1991equivalence}. 
We allow the evaluation time of $\hat f$ to depend on the size of
the training sample in order to account for non-parametric learners, such as nearest-neighbor classifiers.
A naive implementation of the
\citet{DBLP:journals/jacm/MoranY16}
existence proof
yields a runtime of order
$
m^{cd}\tlearn(c'd)
+
m^{cd^*}$
(for some universal constants $c,c'$),
which can be doubly exponential when $d^*=2^d$;
this is
without taking into account the cost of
computing the minimax distribution on the $m^{cd}\times m$
game matrix.

Next, we extend the result in
Theorem~\ref{thm:classification}
from classification to regression:
\begin{theorem}[Efficient compression for regression]
  \label{thm:regression}
  Let $\F\subset[0,1]^\X$
  be a function class
  with
  $t$-fat-shattering dimension $d(t)$,
  dual $t$-fat-shattering dimension $d^*(t)$,
  and suppose that $\calA$ is an ERM (i.e., proper, consistent) learner for $\F$:
For all $f^*\in\C$, and all distributions $D$ over $\X$,
  if $\calA$ receives $m$ 
points $S=\set{x_i}$ drawn
  iid from $D$
  and labeled with $y_i=f^*(x_i)$,
  then
  $\calA$ outputs
an
$\hat f\in\F$
such that $\max_{i\in[m]}\abs{\hat f(x_i)-f^*(x_i)}=0$.
For every such $\calA$,
there is
a randomized
uniformly $\eps$-approximate sample compression scheme for $\F$ of size 
$O(k\tilde{m} \log( k \tilde{m} ) )$, where
$\tilde m=O(d(c\eps)\log(1/\eps))$
and
$k=O(d^*(c\eps)\log(d^*(c\eps)/\eps))$.
Furthermore,
on a sample of any size $m$,
the compression set may be computed in
expected
time
\beq
O(m \teval(\tilde m) (k + \log m) +
\tlearn(\tilde m)\log(m)),
\eeq
where $\tlearn(\ell)$ is the runtime of $\calA$ to compute $\hat f$
on a sample of size $\ell$,
$\teval(\ell)$ is the runtime required to evaluate $\hat f$ on a single $x\in\X$,
and $c$ is a universal constant.
\end{theorem}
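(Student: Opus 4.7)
}
The plan is to mirror the three-stage strategy behind Theorem~\ref{thm:classification}, with each stage replaced by a real-valued analogue: (i) use the ERM $\calA$ to manufacture a weak real-valued learner; (ii) run a real-valued boosting procedure (in the spirit of MedBoost / $\alpha$-Boost) to obtain an ensemble whose weighted median is a ``super-majority'' $\eps$-approximator on the training sample; (iii) sparsify this ensemble down to a constant number $k$ of weak hypotheses, using a uniform-convergence argument over the \emph{dual} class at scale $\Theta(\eps)$. The final compression set consists of the subsamples on which the $k$ surviving weak hypotheses were trained, each of size $\tilde m=O(d(c\eps)\log(1/\eps))$, yielding the claimed compression size of $O(k\tilde m\log(k\tilde m))$ after accounting for the $O(\log(k\tilde m))$ side-information bits that encode the sub-selection.

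\emph{Stage (i): weak regressor from the ERM.} Given a weighting of the training sample produced by the booster, draw a subsample $S'$ of size $\tilde m$ i.i.d.\ from that weighting and feed $S'$ to $\calA$. Since $\calA$ is consistent on $S'$, standard fat-shattering uniform-convergence bounds (Alon--Ben-David--Cesa-Bianchi--Haussler, Bartlett--Long) imply that, with constant probability, the output $\hat f$ satisfies $|\hat f(X)-f^*(X)|\le c\eps$ on a $(\tfrac12+\adv)$-fraction of the \emph{weighted} distribution. This is exactly the ``weak regressor'' notion we need; importantly, it costs only $\tilde m$ examples of compression budget per invocation. The sample size $\tilde m=O(d(c\eps)\log(1/\eps))$ comes directly from the fat-shattering generalization bound at scale $c\eps$.

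\emph{Stage (ii): boosting to a uniform super-majority.} Plug this weak regressor into MedBoost for $T=O(\log m/\adv^2)$ rounds. As in the classification case, the critical observation (following \citet{MR2920188}) is that MedBoost / $\alpha$-Boost actually produce ensembles with a quantitative \emph{margin}: for every training point $x_i$, a $(\tfrac12+\Omega(1))$-weighted fraction of the $h_t$ satisfy $|h_t(x_i)-f^*(x_i)|\le c\eps$. Thus the weighted median of the $h_t$'s approximates $f^*$ to within $c\eps$ uniformly on the sample, with slack.

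\emph{Stage (iii): sparsification via the dual.} This is the stage where the novelty — and the hard part — lies. We interpret the weighted ensemble as a distribution $Q$ on weak hypotheses and ask whether an i.i.d.\ sample of $k$ hypotheses $h_{i_1},\ldots,h_{i_k}\sim Q$ has a median that remains $\eps$-close to $f^*$ on every $x_i$. Fixing $x_i$ and viewing each training point as a real-valued functional on the hypothesis space converts this into a uniform deviation question on the \emph{dual} class at scale $\Theta(\eps)$; Haussler-type covering bounds in terms of the dual fat-shattering dimension $d^*(c\eps)$ then yield that $k=O(d^*(c\eps)\log(d^*(c\eps)/\eps))$ draws suffice for the empirical median over the sub-sample to track the $Q$-median uniformly in $i\in[m]$. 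Composing with stage (ii)'s slack, the sub-sampled median is still an $\eps$-uniform approximation. The reconstruction $\rho$ retrains $\calA$ on each of the $k$ stored subsamples and outputs the median of the resulting $k$ hypotheses.

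The main obstacle is stage (iii): unlike Moran--Yehudayoff's $0/1$ argument, where a simple Chernoff+union bound over $m$ coordinates suffices once one has a majority-vote super-majority, the real-valued setting forces us to control a continuum of thresholds around each target value $f^*(x_i)$ and to couple this to a \emph{dual} fat-shattering covering at scale $c\eps$; getting the covering scale and the boosting margin to line up, so that the final median is genuinely within $\eps$ (rather than $O(\eps)$ with a constant one cannot absorb), is what dictates the specific constants $c$ inside $d(c\eps)$ and $d^*(c\eps)$. The runtime bound is then a routine accounting of $O(\log m/\adv^2)$ boosting rounds (each costing one call to $\calA$ on $\tilde m$ points and $O(m)$ evaluations to reweight), plus the $k$ retrainings and $O(m k)$ evaluations performed during sub-selection.
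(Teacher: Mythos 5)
Your plan matches the paper's proof essentially step for step: a consistent ERM on a $\tilde m$-point subsample of the reweighted distribution serves as an $(\eta,\gamma)$-weak regressor (Theorem~\ref{thm:gen-weak-learn}), K\'egl's \algname{MedBoost} gives the quantile/margin guarantee of Corollary~\ref{cor:kegl-T-size}, and the \algname{Sparsify} step is argued exactly as you propose --- Chernoff for each fixed point plus an $\ell_\infty$ covering of the sample points viewed inside the dual class at scale $\Theta(\eps)$ (Theorem~\ref{thm:real-sparsification}), with the compression set being the stored training subsamples plus ordering bits. The only caveat is that your stage~(i) is not quite an off-the-shelf citation: the paper must prove the weak-learner generalization bound itself (Theorem~\ref{thm:weak-learning-bound} with Lemma~\ref{lem:eps-gam-covering-numbers}), since existing results of that type do not cover this $(\dev,\adv)$ criterion for general fat-shattering classes, but the argument you sketch is the intended one.
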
  

A key component in the above result is our construction
of a generic $(\dev,\adv)$-weak learner.
\begin{definition}
  \label{def:weak}
  For $\dev \in [0,1]$ and $\adv \in [0,1/2]$,
  we say that $f:\X\to\R$ is an  
  an \emph{$(\dev,\adv)$-weak hypothesis}
  (with respect to distribution $D$ and target $f^*\in\F$)
if $$\P_{X\sim D}(|f(X)-f^*(X)|>\dev)\le \frac12-\adv.$$
\end{definition}

\begin{theorem}[Generic weak learner]
  \label{thm:gen-weak-learn}
  Let $\F\subset[0,1]^\X$ be a
  function class
  with $t$-fat-shattering dimension $d(t)$.
For some universal numerical constants $c_{1},c_{2},c_{3} \in (0,\infty)$, 
for any $\dev,\delta \in (0,1)$ and $\adv \in (0,1/4)$,
any $f^*\in\F$,
and any distribution $D$,
letting $X_{1},\ldots,X_{m}$ be drawn iid from $D$, where 
\begin{equation*}
  m = \left\lceil c_{1} \left(  d(c_{2} \dev ) \ln\!\left( \frac{c_{3}}{\dev} \right) + \ln\!\left(\frac{1}{\delta}\right) \right) \right\rceil,
\end{equation*}
with probability at least $1-\delta$, every $f \in \F$ with
$
\max_{i\in[m]}
|f(X_{i}) - f^{*}(X_{i})| = 0$
is an $(\dev,\adv)$-weak hypothesis
with respect to $D$ and $f^*$.
\end{theorem}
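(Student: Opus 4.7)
The plan is to prove the statement as a realizable-PAC generalization bound for the binary loss class $G = \{g_{f} : f \in \F\}$ with $g_{f}(x) := \mathbf{1}[|f(x) - f^{*}(x)| > \dev]$. The required conclusion is: with probability at least $1 - \delta$, every $f \in \F$ satisfying $g_{f}(X_{i}) = 0$ for all $i$ --- which is ensured by the stronger hypothesis $f(X_{i}) = f^{*}(X_{i})$ --- has $\P_{X}(g_{f}(X) = 1) \le \tfrac{1}{2} - \adv$. Since $\tfrac{1}{2}-\adv$ is a positive constant, this is a generalization question at \emph{constant} accuracy, but the fine scale $\dev$ must still enter the complexity term via the fat-shattering dimension of $\F$ at a scale comparable to $\dev$.

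First I would apply the standard ghost-sample (symmetrization) argument. Drawing an independent sample $X'_{1},\ldots,X'_{m} \sim D$, and taking $m$ large enough that the ghost empirical measure concentrates, the failure probability is bounded up to a factor $2$ by
\[
\P\!\left(\exists f \in \F : \max_{i} |f(X_{i}) - f^{*}(X_{i})| = 0 \text{ and } \tfrac{1}{m} \sum_{j} g_{f}(X'_{j}) \ge \tfrac{1}{2} - 2\adv\right).
\]
Next I would condition on the $2m$ observed points and take an $L_{\infty}$-cover $\tilde\F \subset \F$ of $\F$ restricted to the sample at scale $\dev/2$. Each witness $f$ has a representative $\tilde f \in \tilde\F$ with $|f(y) - \tilde f(y)| \le \dev/2$ at every sample point $y$. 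Exact match on the first half then upgrades to $|\tilde f(X_{i}) - f^{*}(X_{i})| \le \dev/2$, while $|f(X'_{j}) - f^{*}(X'_{j})| > \dev$ downgrades to $|\tilde f(X'_{j}) - f^{*}(X'_{j})| > \dev/2$. Letting $\tilde g(x) := \mathbf{1}[|\tilde f(x) - f^{*}(x)| > \dev/2]$, the bad event forces $\tilde g$ to vanish on the first half and to take value $1$ on at least $(\tfrac{1}{2}-2\adv)m$ points of the second half.

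I would then condition on the unordered multiset of the $2m$ points and invoke permutation symmetry. For each fixed $\tilde f \in \tilde\F$ whose $\tilde g$ takes value $1$ at exactly $K$ of these $2m$ points, the probability that all $K$ land in the ghost half is $\binom{m}{K}/\binom{2m}{K} \le 2^{-K}$, and the event of interest additionally requires $K \ge (\tfrac{1}{2}-2\adv)m$. A union bound over $\tilde\F$ yields failure probability at most $2|\tilde\F| \cdot 2^{-(1/2 - 2\adv)m}$. Substituting a fat-shattering-based covering estimate of the form $\log|\tilde\F| = O(d(c\dev)\log(1/\dev))$ --- in the spirit of \citet{alon97scalesensitive} and its subsequent sharpenings --- and setting the product $\le \delta$ yields precisely the $m$ claimed in the theorem.

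The main obstacle is the covering estimate. The VC dimension of $G$ is \emph{not}, in general, bounded by $d(t)$ at any fixed scale $t > 0$ (one can exhibit $\F$ where the indicator class induced by thresholding $f - f^{*}$ has unbounded VC dimension while the $t$-fat-shattering dimension is finite for every $t > 0$), so one cannot simply invoke the standard realizable-PAC bound for binary classes with $\mathrm{VC}(G) \le d(c\dev)$. Routing through an $L_{\infty}$ cover of $\F$ at scale $\dev/2$ is the fix, but extracting a covering estimate whose $\log$-factor is tight enough to give a single $\log(1/\dev)$ in the final sample-size bound (without a spurious $\log m$) is the principal quantitative step.
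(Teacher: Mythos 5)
Your high-level skeleton (ghost-sample symmetrization, a data-dependent cover chosen from the $2m$ points, a permutation/Chernoff bound per cover element, then a union bound) is exactly the shape of the paper's argument, but the step you yourself flag as ``the principal quantitative step'' is a genuine gap, and it cannot be closed along the route you propose. An empirical $L_\infty$ cover of $\F$ on the sample at scale $\dev/2$ does \emph{not} admit a cardinality bound of the form $\log|\tilde\F| = O(d(c\dev)\log(1/\dev))$ independent of the sample size: take $\F$ to be the ``spike'' functions equal to $1$ at a single point of $\X$ and $0$ elsewhere; then $d(t)=1$ for every $t<1/2$, yet any $L_\infty$-on-the-sample cover at scale $1/4$ over $n$ distinct points must contain at least $n$ elements, so $\log N_\infty \gtrsim \log n$ necessarily grows with the sample. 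The true bound in the spirit of \citet{alon97scalesensitive} is $\log N_\infty = O\big(d(c\dev)\log^2(n/(d(c\dev)\dev))\big)$, and feeding that into your union bound and solving for $m$ gives $m = O\big(d(c\dev)\log^2(1/\dev) + \log(1/\delta)\big)$ --- an extra $\log(1/\dev)$ factor --- so even after repair your route proves a strictly weaker statement, not ``precisely the $m$ claimed.''

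The paper's proof (Theorem~\ref{thm:weak-learning-bound} plus Lemma~\ref{lem:eps-gam-covering-numbers}) avoids this precisely by \emph{not} covering in $L_\infty$: it covers under the empirical disagreement pseudometric $\rho_{\dev'}(f,g) = P_{2m}(x : |f(x)-g(x)|>\dev')$ at scale $\beta/8$, with $\beta = 1/2-\adv$, and bounds that covering number via Markov/Jensen by the empirical $L_1\le L_2$ covering number at scale $\dev'\beta$, which by \citet{MR1965359} is at most $(2/(\dev'\beta))^{c d(c'\dev'\beta)}$, independent of $m$. The price is that consistency no longer transfers exactly to the cover element, so the argument is run at an intermediate threshold $\dev(1+\alpha)/2$, allowing the cover element to exceed it on up to a $\beta/8$ fraction of the first half while demanding more than $3\beta/8$ on the ghost half, and the without-replacement Chernoff bound is applied to that gap. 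If you swap your $L_\infty$ cover for such an $L_p$ ($p<\infty$) or disagreement-metric cover and carry the approximate-consistency bookkeeping, your outline does go through and recovers the theorem. One further small fix: measure the ghost-sample excess in units of $\beta = 1/2-\adv \ge 1/4$ rather than $\adv$; with your threshold $\tfrac12 - 2\adv$ the symmetrization step needs $m \gtrsim \adv^{-2}$, which the theorem's $m$ (independent of $\adv$) does not supply --- since the conclusion only requires $P(|f-f^*|>\dev)\le 1/2-\adv$ and the worst case is $\adv$ near $1/4$, constant slack suffices.
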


In fact, our results would also allow us to use any hypothesis $f \in \F$ with 
$\max_{i \in [m]} |f(X_{i}) - f^{*}(X_{i})|$ bounded below $\dev$: for instance, 
bounded by $\dev / 2$.  This can then also be plugged into the construction of the 
compression scheme and this criterion can be used in place of consistency in Theorem~\ref{thm:regression}.


In 
Sections~\ref{sec:NN} and \ref{sec:BV} 
we give applications to sample compression
for nearest-neighbor
and
bounded-variation regression.

\section{Related work}
\label{sec:rel-work}

It appears that
generalization bounds
based on sample compression
were
independently discovered by
\citet{Littlestone86relatingdata} and \citet{MR1383093}
and further elaborated upon by \citet{DBLP:journals/ml/GraepelHS05};
see \citet{DBLP:journals/ml/FloydW95} for background and discussion.
A more general kind of Occam learning was discussed in \citet{MR1072253}.
Computational lower bounds on sample compression were
obtained in \citet{gottlieb2014near},
and some communication-based
lower bounds
were given in
\citet{DBLP:journals/corr/abs-1711-05893}.

\begin{sloppypar}
Beginning with \citet{FreundSchapire97}'s \algname{AdaBoost.R}
algorithm, there have been numerous attempts to extend
AdaBoost to the real-valued case
\citep{bertoni1997boosting,Drucker:1997:IRU:645526.657132,avnimelech1999boosting,Karakoulas00,DuffyHelmbold02,kegl2003robust,NOCK200725}
along with various theoretical and heuristic
constructions of particular weak regressors
\citep{Mason1999,MR1873328,DBLP:journals/ml/MannorM02};
see also the survey \citet{Mendes-Moreira2012}.
\end{sloppypar}

\citet[Remark 2.1]{DuffyHelmbold02}
spell out a central
technical challenge:
no boosting algorithm can
``always force the
base regressor to output a useful function by simply modifying the distribution over the
sample''. This is because unlike a binary classifier, which localizes errors on specific examples,
a real-valued hypothesis can spread its error evenly over the entire sample,
and it will not be affected by reweighting.
The $(\dev,\adv)$-weak learner,
which has appeared, among other works, in
\citet{DBLP:journals/cpc/AnthonyBIS96,DBLP:journals/siamcomp/Simon97,avnimelech1999boosting,kegl2003robust},
gets around this difficulty
---
but provable general constructions of such learners have been lacking.
Likewise, the heart of our sample compression engine, \algname{MedBoost},
has been widely in use since \citet{FreundSchapire97} in various guises.
Our Theorem~\ref{thm:gen-weak-learn}
supplies the remaining piece of the puzzle:
{\em any} sample-consistent regressor
applied to some random sample of bounded size
yields
an $(\dev,\adv)$-weak hypothesis.
The closest analogue we were able to find was
\citet[Theorem 3]{DBLP:journals/cpc/AnthonyBIS96},
which is non-trivial only for function classes with finite pseudo-dimension,
and is inapplicable, e.g., to classes of $1$-Lipschitz or bounded variation functions.

The literature on general sample compression schemes for real-valued functions is quite sparse. 
There are
well-known
narrowly tailored
results on specifying functions or approximate versions of functions 
using a finite number of points, such as the
classical
fact that a polynomial of degree $p$ can be perfectly 
recovered from $p+1$ points.
To our knowledge, the only \emph{general} results on sample compression for real-valued functions
(applicable to \emph{all} learnable function classes) is Theorem 4.3 of \citet*{david2016supervised}.
They propose a general technique to convert any learning algorithm achieving an arbitrary sample 
complexity $M(\eps,\delta)$ into a compression scheme of size $O(M(\eps,\delta) \log(M(\eps,\delta)))$, 
where $\delta$ may approach $1$.  However, their notion of compression scheme is significantly weaker 
than ours: namely, they allow $\hat{h} = \rho(\kappa(S))$ to satisfy merely $\frac{1}{m} \sum_{i=1}^{m} | \hat{h}(x_i) - h^{*}(x_i) | \leq \eps$, 
rather than our \emph{uniform} $\eps$-approximation requirement $\max_{1 \leq i \leq m} | \hat{h}(x_i) - h^{*}(x_i) | \leq \eps$.
In particular, in the special case of $\F$ a family of \emph{binary}-valued functions, their notion of 
sample compression does \emph{not} recover the usual notion of sample compression schemes for classification, 
whereas our uniform $\eps$-approximate compression notion \emph{does} recover it as a special case.  We therefore 
consider our notion to be a more
fitting
generalization of the definition of sample compression 
to the real-valued case.

\section{Boosting Real-Valued Functions}
\label{subsec:real-boosting}

As mentioned above, the notion of a \emph{weak learner} for learning real-valued functions must be formulated carefully.
The na\"{i}ve thought that we could take any learner guaranteeing, say, absolute loss at most $\frac{1}{2}-\adv$ 
is known to not be strong enough to enable boosting to $\eps$ loss.  However, if we make the requirement too strong, 
such as in
\citet{FreundSchapire97}
for
{\tt AdaBoost.R},
then the sample complexity of weak learning will be so high that weak learners cannot 
be expected to exist for large classes of functions.  
However,
our Definition~\ref{def:weak},
which has been proposed independently by \citet{DBLP:journals/siamcomp/Simon97} and \citet{kegl2003robust},
appears to yield the appropriate notion of \emph{weak learner} for boosting real-valued functions.


In the context of boosting for real-valued functions, 
the notion of an $(\dev,\adv)$-weak hypothesis plays a role analogous to 
the usual notion of a weak hypothesis in boosting for classification.
Specifically, the following boosting algorithm was proposed by \citet{kegl2003robust}.
As it will be convenient for our later results, we express
its output as a sequence
of functions and weights; 
the boosting guarantee from \citet{kegl2003robust} applies to the weighted quantiles (and in particular, the weighted median) of these function values.

\RestyleAlgo{ruled}
\begin{algorithm}[h]
  \label{alg:medboost}
\SetAlgoLined
 \caption{\algname{MedBoost}($\{(x_i,y_i)\}_{i\in[m]}$,$T$,$\adv$,$\dev$)}
\begin{algorithmic}[1]
\STATE Define $P_{0}$ as the uniform distribution over $\{1,\ldots,n\}$
\FOR {$t=0,\ldots,T$}
    \STATE Call weak learner to get $h_{t}$ and $(\dev/2,\adv)$-weak hypothesis wrt $(x_{i},y_{i})\! :\! i \!\sim\! P_{t}$\\ (repeat until it succeeds)
    \FOR {$i = 1,\ldots,m$}
      \STATE $\theta_{i}^{(t)} \gets 1 - 2 \ind[ |h_{t}(x_{i}) - y_{i}| > \dev/2 ]$
    \ENDFOR
    \STATE $\alpha_{t} \gets \frac{1}{2} \ln\!\left( \frac{ (1-\adv) \sum_{i=1}^{m} P_{t}(i) \ind[ \theta_{i}^{(t)} = 1 ] }{ (1+\adv) \sum_{i=1}^{m} P_{t}(i) \ind[ \theta_{i}^{(t)} = -1 ] } \right)$
    \IF {$\alpha_{t} = \infty$} 
      \STATE Return $T$ copies of $h_{t}$, and $(1,\ldots,1)$
    \ENDIF
    \FOR {$i = 1,\ldots,m$}
      \STATE $P_{t+1}(i) \gets P_{t}(i) \frac{\exp\{-\alpha_{t}\theta_{i}^{(t)}\}}{\sum_{j=1}^{m} P_{t}(j) \exp\{-\alpha_{t}\theta_{j}^{(t)}\}}$
    \ENDFOR
\ENDFOR
\STATE Return $(h_{1},\ldots,h_{T})$ and $(\alpha_{1},\ldots,\alpha_{T})$ 
\end{algorithmic}
\end{algorithm}

\begin{sloppypar}
Here we define the weighted median as 
\begin{equation*}
{\rm Median}(y_{1},\ldots,y_{T};\alpha_{1},\ldots,\alpha_{T}) = \min\!\left\{ y_{j} : \frac{\sum_{t=1}^{T} \alpha_{t} \ind[ y_{j} < y_{t} ] }{\sum_{t=1}^{T} \alpha_{t}} < \frac{1}{2} \right\}.
\end{equation*}
Also define the weighted \emph{quantiles}, for $\adv \in [0,1/2]$, as 
\begin{align*}
Q_{\adv}^{+}(y_{1},\ldots,y_{T};\alpha_{1},\ldots,\alpha_{T}) & = \min\!\left\{ y_{j} : \frac{\sum_{t=1}^{T} \alpha_{t} \ind[ y_{j} < y_{t} ] }{\sum_{t=1}^{T} \alpha_{t}} < \frac{1}{2} - \adv \right\}
\\ Q_{\adv}^{-}(y_{1},\ldots,y_{T};\alpha_{1},\ldots,\alpha_{T}) & = \max\!\left\{ y_{j} : \frac{\sum_{t=1}^{T} \alpha_{t} \ind[ y_{j} > y_{t} ] }{\sum_{t=1}^{T} \alpha_{t}} < \frac{1}{2} - \adv \right\},
\end{align*}
and abbreviate $Q_{\adv}^{+}(x) = Q_{\adv}^{+}(h_{1}(x),\ldots,h_{T}(x);\alpha_{1},\ldots,\alpha_{T})$ and $Q_{\adv}^{-}(x) = Q_{\adv}^{-}(h_{1}(x),\ldots,h_{T}(x);\alpha_{1},\ldots,\alpha_{T})$ 
for $h_{1},\ldots,h_{T}$ and $\alpha_{1},\ldots,\alpha_{T}$ the values returned by \algname{MedBoost}.
\end{sloppypar}

Then \citet{kegl2003robust} proves the following result.

\begin{lemma}
\label{lem:kegl}
(\citet{kegl2003robust}) 
For a training set $Z = \{(x_{1},y_{1}),\ldots,(x_{m},y_{m})\}$ of size $m$, 
the return values of \algname{MedBoost} satisfy 
\begin{equation*}
\frac{1}{m} \sum_{i=1}^{m} \ind\!\left[ \max\!\left\{ \left| Q_{\adv/2}^{+}(x_{i}) - y_{i} \right|, \left| Q_{\adv/2}^{-}(x_{i}) - y_{i} \right| \right\} > \dev/2 \right] 
\leq \prod_{t=1}^{T} e^{\adv \alpha_{t}} \sum_{i=1}^{m} P_{t}(i) e^{-\alpha_{t} \theta_{i}^{(t)}}.
\end{equation*}
\end{lemma}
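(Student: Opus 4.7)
The plan is to follow the standard AdaBoost telescoping analysis, where the only real-valued ingredient is a reduction from a ``bad'' weighted quantile at point $(x_i,y_i)$ to a large weighted fraction of weak hypotheses $h_t$ that are far from $y_i$. Write $p_i(y) := \frac{\sum_{t=1}^{T} \alpha_t \ind[h_t(x_i) > y]}{\sum_{t=1}^{T} \alpha_t}$, which is non-increasing in $y$. By the definition of $Q_{\adv/2}^{+}$, any $y < Q_{\adv/2}^{+}(x_i)$ satisfies $p_i(y) \geq \tfrac{1}{2} - \tfrac{\adv}{2}$, and symmetrically for $Q_{\adv/2}^{-}$. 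Thus if $Q_{\adv/2}^{+}(x_i) > y_i + \dev/2$, the weighted fraction of $h_t$ with $h_t(x_i) > y_i + \dev/2$ is at least $\tfrac{1}{2} - \tfrac{\adv}{2}$; if $Q_{\adv/2}^{-}(x_i) < y_i - \dev/2$, the analogous lower-tail bound holds. The remaining two sign cases ($Q_{\adv/2}^{+}(x_i) < y_i - \dev/2$, $Q_{\adv/2}^{-}(x_i) > y_i + \dev/2$) are subsumed by these via the ordering $Q_{\adv/2}^{-} \leq Q_{\adv/2}^{+}$. In every case the weighted fraction of rounds $t$ with $\theta_i^{(t)} = -1$ is at least $\tfrac{1}{2} - \tfrac{\adv}{2}$, which rearranges to $\sum_{t=1}^{T} \alpha_t \theta_i^{(t)} \leq \adv \sum_{t=1}^{T} \alpha_t$.

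The rest is bookkeeping. Since the exponent is nonnegative whenever the indicator fires, this yields, for every $i$,
\[
\ind\!\left[\max\{|Q_{\adv/2}^{+}(x_i) - y_i|,\, |Q_{\adv/2}^{-}(x_i) - y_i|\} > \dev/2\right] \;\leq\; \exp\!\left(\adv \sum_{t=1}^{T} \alpha_t \;-\; \sum_{t=1}^{T} \alpha_t \theta_i^{(t)}\right).
\]
Averaging over $i$ and pulling out the constant factor gives
\[
\frac{1}{m}\sum_{i=1}^{m} \ind[\cdots] \;\leq\; e^{\adv \sum_{t} \alpha_t}\cdot \frac{1}{m}\sum_{i=1}^{m} \exp\!\left(-\sum_{t=1}^{T} \alpha_t \theta_i^{(t)}\right),
\]
so it suffices to identify the second factor with the telescoped product $\prod_{t=1}^{T} Z_t$, where $Z_t := \sum_{i=1}^{m} P_t(i) e^{-\alpha_t \theta_i^{(t)}}$.

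For this I unroll the \algname{MedBoost} update $P_{t+1}(i) = P_t(i) e^{-\alpha_t \theta_i^{(t)}}/Z_t$: starting from $P_0(i) = 1/m$, induction gives $P_{T+1}(i) = \frac{1}{m} \prod_{t=1}^{T} e^{-\alpha_t \theta_i^{(t)}}\, /\, \prod_{t=1}^{T} Z_t$, and summing over $i$ (using $\sum_i P_{T+1}(i) = 1$) yields $\prod_{t=1}^{T} Z_t = \frac{1}{m}\sum_{i=1}^{m} \exp(-\sum_{t} \alpha_t \theta_i^{(t)})$. Substituting back and distributing $e^{\adv \sum_t \alpha_t} = \prod_t e^{\adv \alpha_t}$ across the product produces the stated bound $\prod_{t=1}^{T} \left(e^{\adv \alpha_t}\, Z_t\right)$. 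The only delicate step is the quantile-to-weighted-fraction reduction in the first paragraph, since it is precisely where the definition of $Q^{\pm}_{\adv/2}$ (including the $-\adv/2$ safety margin in the denominator of the quantile inequality) is consumed to produce exactly the $\adv$ slack that the potential-function telescoping requires; every subsequent line is identical to the classical training-error bound for AdaBoost.
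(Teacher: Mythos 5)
Your proof is correct, and it is the standard potential-function argument one would expect; the paper itself states this lemma as a citation to K\'{e}gl (2003) without reproducing a proof, so there is no in-paper argument to compare against. Your reduction from a bad quantile to a $\tfrac12-\tfrac{\adv}{2}$ weighted fraction of rounds with $\theta_i^{(t)}=-1$, the rearrangement to $\sum_t \alpha_t\theta_i^{(t)} \le \adv\sum_t\alpha_t$, the exponential upper bound on the indicator, and the telescoping of $\prod_t Z_t$ via the normalizing constants in the \algname{MedBoost} weight update are all sound and together give exactly the stated bound $\prod_{t=1}^T \bigl(e^{\adv\alpha_t} Z_t\bigr)$. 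The one point worth stating explicitly rather than leaving implicit is the claim $Q^-_{\adv/2}\le Q^+_{\adv/2}$, which you use to fold two of the four sign cases into the other two: it follows because if $Q^+ < Q^-$ then every value is either strictly above $Q^+$ or strictly below $Q^-$, forcing the sum of the two tail masses to be at least $1$, contradicting that each is below $\tfrac12-\tfrac{\adv}{2}$. Likewise it is worth a sentence that the quantile condition, though phrased over the discrete set $\{h_j(x_i)\}_j$, extends to arbitrary thresholds $y<Q^+$ because $p_i(\cdot)$ is a right-continuous nonincreasing step function whose jumps occur exactly at those values; this is precisely what lets you evaluate at $y_i+\dev/2$. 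Neither gap is fatal, but both deserve one line in a complete write-up.
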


We note that, in the special case of binary classification, \algname{MedBoost}
is closely related to the well-known AdaBoost algorithm
\citep{FreundSchapire97}, 
and the above results correspond to a
standard
margin-based analysis of
\citet{MR1673273}.
For our purposes, we will need the following immediate corollary of this, 
which follows from plugging in the values of $\alpha_{t}$ and using the weak learning assumption, 
which implies $\sum_{i=1}^{m} P_{t}(i) \ind[ \theta_{i}^{(t)} = 1 ] \geq \frac{1}{2} + \gamma$ for all $t$.

\begin{corollary}
\label{cor:kegl-T-size}
For $T = \Theta\!\left(\frac{1}{\adv^{2}} \ln(m)\right)$, 
every $i \in \{1,\ldots,m\}$ has 
\begin{equation*}
\max\!\left\{ \left| Q_{\adv/2}^{+}(x_{i}) - y_{i} \right|, \left| Q_{\adv/2}^{-}(x_{i}) - y_{i} \right| \right\} \leq \dev/2.
\end{equation*}
\end{corollary}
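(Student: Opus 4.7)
The plan is to argue that the bound of Lemma~\ref{lem:kegl} becomes strictly less than $1/m$ for $T = \Theta(\ln(m)/\adv^2)$, at which point the LHS (a nonnegative average of $0/1$ indicators with denominator $m$) is forced to equal zero, yielding the claim uniformly over $i$.

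First I would analyze a single factor $Z_{t} := e^{\adv\alpha_{t}} \sum_{i=1}^{m} P_{t}(i) e^{-\alpha_{t}\theta_{i}^{(t)}}$. Writing $W_{+} = \sum_{i} P_{t}(i)\ind[\theta_{i}^{(t)}=1]$ and $W_{-} = 1 - W_{+}$, the weak learning hypothesis gives $W_{+} \geq \tfrac{1}{2} + \adv$. Plugging in the MedBoost choice
\[
\alpha_{t} = \tfrac{1}{2}\ln\!\left(\frac{(1-\adv)W_{+}}{(1+\adv)W_{-}}\right),
\]
a direct computation yields $\sum_{i} P_{t}(i)e^{-\alpha_{t}\theta_{i}^{(t)}} = W_{+}e^{-\alpha_{t}} + W_{-}e^{\alpha_{t}} = \frac{2\sqrt{W_{+}W_{-}}}{\sqrt{1-\adv^{2}}}$, and combining with $e^{\adv\alpha_{t}}$ and maximizing the resulting expression over $W_{+} \in [\tfrac{1}{2}+\adv,1]$ (using that the constraint $W_{+} \ge \tfrac12+\adv$ is not at the unconstrained maximizer $W_{+} = \tfrac{1+\adv}{2}$) gives a bound of the form $Z_{t} \leq 1 - c\adv^{2}$ for a universal constant $c>0$, uniformly in $t$. (This is the standard margin-based analysis from \citet{MR1673273} specialized to edge $\adv$ and margin parameter $\adv/2$.)

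Next I would apply this round-by-round bound to the product in Lemma~\ref{lem:kegl}, obtaining
\[
\frac{1}{m}\sum_{i=1}^{m} \ind\!\left[ \max\!\left\{ \left| Q_{\adv/2}^{+}(x_{i}) - y_{i} \right|, \left| Q_{\adv/2}^{-}(x_{i}) - y_{i} \right| \right\} > \dev/2 \right] \leq (1 - c\adv^{2})^{T} \leq e^{-c\adv^{2} T}.
\]
Choosing $T = \lceil (2/c) \adv^{-2} \ln(m) \rceil = \Theta(\adv^{-2}\ln m)$ makes the right-hand side at most $1/m^{2} < 1/m$. Since the left-hand side is a sum of $m$ terms in $\{0,1/m\}$, it is either zero or at least $1/m$; the latter is ruled out, so every indicator must vanish, which is exactly the conclusion.

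The only nontrivial step is the per-round bound $Z_{t} \le 1 - c\adv^{2}$; this is a routine but slightly tedious calculation (essentially the derivation that underlies margin-based AdaBoost analyses), and one should verify that the relevant function of $W_{+}$ is maximized on the feasible region $W_{+}\ge \tfrac{1}{2}+\adv$ at the boundary point, so that the weak-learning edge can be plugged in directly. Everything else is bookkeeping.
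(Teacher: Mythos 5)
Your proof is correct and is exactly the argument the paper has in mind: the paper states the corollary "follows from plugging in the values of $\alpha_t$ and using the weak learning assumption," and you carry out precisely that computation, showing the per-round factor $Z_t = e^{\adv\alpha_t}\sum_i P_t(i) e^{-\alpha_t\theta_i^{(t)}}$ is at most $1 - c\adv^2$ when $W_+ \geq \tfrac12 + \adv$, so the product in Lemma~\ref{lem:kegl} drops below $1/m$ for $T = \Theta(\adv^{-2}\ln m)$, forcing every indicator to vanish. The only small omission is the degenerate $\alpha_t = \infty$ branch of \algname{MedBoost} (where $W_- = 0$), but there the conclusion holds trivially since all $P_t(i) > 0$ forces $|h_t(x_i) - y_i| \leq \dev/2$ for every $i$.
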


\section{The Sample Complexity of Learning Real-Valued Functions}
\label{subsec:weak-learning}

This section reveals our intention in choosing this notion of weak hypothesis, 
rather than using, say, an $\eps$-good strong learner under absolute loss.
In addition to being a strong enough notion for boosting to work, 
we show here that it is also a weak enough notion for the sample complexity 
of weak learning to be of reasonable size: namely, a size quantified by the fat-shattering dimension.
This result is also relevant to an open question posed by
\citet{DBLP:journals/siamcomp/Simon97}, 
who proved a lower bound for the sample complexity of finding an $(\dev,\adv)$-weak hypothesis, 
expressed in terms of a related complexity measure, 
and asked whether a related upper bound might also hold.  
We establish a general upper bound here, witnessing the same dependence on the parameters $\dev$ and $\adv$
as observed in Simon's lower bound (up to a log factor) aside from a difference in the key complexity measure 
appearing in the bounds.

Define $\rho_{\dev}(f,g) = P_{2m}( x : |f(x) - g(x)| > \dev )$, where $P_{2m}$ is the empirical measure induced by $X_{1},\ldots,X_{2m}$ iid $P$-distributed random variables
(the $m$ data points and $m$ ghost points).
Define $N_{\dev}(\beta)$ as the $\beta$-covering numbers of $\F$
under the $\rho_{\dev}$ pseudo-metric.

\begin{theorem}
\label{thm:weak-learning-bound}
Fix any $\dev,\beta \in (0,1)$, $\alpha \in [0,1)$, and $m \in \mathbb{N}$. 
For $X_{1},\ldots,X_{m}$ iid $P$-distributed, 
with probability at least $1 - \E\!\left[ N_{\dev (1-\alpha)/2}(\beta/8) \right] 2 e^{-m \beta / 96}$, 
every $f \in \F$ with $\max_{1 \leq i \leq m} |f(X_{i}) - f^{*}(X_{i})| \leq \alpha \dev$ 
satisfies $P( x : | f(x) - f^{*}(x) | > \dev ) \leq \beta$.
\end{theorem}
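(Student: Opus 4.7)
The approach is the standard symmetrization-then-cover template, after Vapnik-Chervonenkis-Pollard, specialized to the empirical pseudometric $\rho_{\dev(1-\alpha)/2}$ underlying the covering number $N_{\dev(1-\alpha)/2}(\beta/8)$. First I would introduce an independent ghost sample $X_1',\ldots,X_m' \sim P$, with corresponding empirical measure $P'_m$, and replace the unobservable population event $P(|f - f^*| > \dev) > \beta$ by its ghost-sample analog $P'_m(x : |f(x) - f^*(x)| > \dev) \geq \beta/2$. A binomial lower-tail Chernoff bound applied conditionally on a measurable selection of $f$ shows that, once $m\beta$ exceeds a small universal constant (otherwise the stated inequality is vacuous), this ghost analog holds with conditional probability at least $1/2$, so the original probability is at most twice the ghost one.

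Next I would condition on the unordered joint sample $Z = (X_1,\ldots,X_m,X_1',\ldots,X_m')$ and fix a minimum $\beta/8$-cover $C \subseteq \F$ in $\rho_{\dev(1-\alpha)/2}$, whose cardinality equals $N_{\dev(1-\alpha)/2}(\beta/8)$. For any $f$ witnessing the ghost event, a nearest representative $g \in C$ has $|f - g| > \dev(1-\alpha)/2$ on at most $2m \cdot \beta/8 = m\beta/4$ joint-sample indices; at all other indices, the triangle inequality lifts the relevant properties from $f$ to $g$ at the inflated threshold $\dev(1+\alpha)/2$, because $\alpha\dev + \dev(1-\alpha)/2 = \dev(1+\alpha)/2 = \dev - \dev(1-\alpha)/2$. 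Writing $B_g = \{i \in [2m] : |g(Z_i) - f^*(Z_i)| > \dev(1+\alpha)/2\}$, this yields the pair of constraints $|B_g \cap \mathrm{orig}| \leq m\beta/4$ and $|B_g \cap \mathrm{ghost}| \geq m\beta/2 - m\beta/4 = m\beta/4$.

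Conditioning further on $Z$, the orig/ghost partition is a uniformly random size-$m$ subset of $[2m]$, so for each fixed $g$ the count $|B_g \cap \mathrm{orig}|$ is hypergeometric with mean $|B_g|/2$. A relative-deviation Chernoff bound for the hypergeometric yields a per-$g$ probability of order $e^{-\Omega(m\beta)}$ for the pair of constraints above; summing over $g \in C$ and taking expectation over $Z$ gives $\P(\text{ghost event}) \leq \E[N_{\dev(1-\alpha)/2}(\beta/8)] \cdot e^{-m\beta/c}$, and the factor of $2$ from symmetrization produces the stated bound with $c$ tuned to $96$. The one delicate step is this final concentration: the hardest regime is $|B_g|$ near the boundary value $m\beta/2$, where the hypergeometric mean essentially equals the target $m\beta/4$ and an additive Hoeffding bound would yield only $e^{-O(m\beta^2)}$, costing a spurious factor of $\beta$ in the exponent. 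Extracting a linear-in-$\beta$ exponent requires using both imbalance constraints on $|B_g \cap \mathrm{orig}|$ and $|B_g \cap \mathrm{ghost}|$ jointly (they cooperate to force $|B_g|$ to put almost all of its mass on the ghost side) together with the multiplicative form of the Chernoff bound, and this is where the universal constant in the exponent gets pinned down.
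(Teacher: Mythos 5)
Your overall route is the same as the paper's: ghost-sample symmetrization with a conditional Chernoff bound (the factor of $2$), an empirical $\beta/8$-cover of $\F$ under $\rho_{\dev(1-\alpha)/2}$ fixed after conditioning on the $2m$ points, the triangle-inequality shift to the intermediate threshold $\dev(1+\alpha)/2$ (using $\alpha\dev + \dev(1-\alpha)/2 = \dev - \dev(1-\alpha)/2$), a union bound over the cover, and a without-replacement (hypergeometric) multiplicative Chernoff bound; the paper randomizes by swapping each pair $\{i,m+i\}$ independently rather than drawing a uniform size-$m$ subset, but that difference is immaterial.

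There is, however, a genuine gap in your final concentration step. The pair of constraints you derive, $|B_g\cap\mathrm{orig}|\le m\beta/4$ and $|B_g\cap\mathrm{ghost}|\ge m\beta/4$, spends the full disagreement budget $m\beta/4$ on each half separately, and the resulting joint event is \emph{not} exponentially unlikely: if $|B_g|=m\beta/2$, the two constraints are the same event, namely that the hypergeometric count $|B_g\cap\mathrm{orig}|$ does not exceed its mean $|B_g|/2$, which has probability about $1/2$. No form of the Chernoff bound, and no appeal to ``using both constraints jointly,'' extracts $e^{-\Omega(m\beta)}$ from these two inequalities, because an even split of $B_g$ satisfies both --- nothing in them forces the mass of $B_g$ onto the ghost side. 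The missing ingredient is that the budget is \emph{shared}: if $D_{\mathrm{orig}}+D_{\mathrm{ghost}}\le m\beta/4$ denotes the split of the disagreement indices, then $|B_g\cap\mathrm{orig}|\le D_{\mathrm{orig}}$ and $|B_g\cap\mathrm{ghost}|> m\beta/2-D_{\mathrm{ghost}}$, which together give $|B_g\cap\mathrm{orig}| < |B_g|/2 - m\beta/8$ \emph{in addition to} $|B_g\cap\mathrm{orig}|\le m\beta/4$. With both of these, a short case split on $b=|B_g|$ (use the first when $b\le 3m\beta/4$, the second when $b>3m\beta/4$; in either case the orig count must fall below a constant fraction, e.g.\ two thirds, of its conditional mean, and that mean is $\Omega(m\beta)$) yields the constant-factor relative deviation that the multiplicative, without-replacement Chernoff bound converts into $e^{-\Omega(m\beta)}$. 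The paper organizes this more cleanly by building a constant-factor gap into the two thresholds for the cover element from the outset (ghost fraction $>3\beta/8$ versus orig fraction $\le\beta/8$), so a single multiplicative bound suffices; either repair is local, but as written your sketch of the ``delicate step'' would not go through.
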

\begin{proof} 
This proof roughly follows the usual symmetrization argument for uniform convergence \citet{MR0288823,DBLP:journals/iandc/Haussler92}, 
with a few important modifications to account for this $(\dev,\beta)$-based criterion. 
If $\E\!\left[ N_{\dev (1-\alpha)/2}(\beta/8) \right]$ is infinite, then the result is trivial, so let us suppose it is finite for the remainder of the proof.
Similarly, if $m < 8/\beta$, then $2 e^{-m \beta/96} > 1$ and hence the claim trivially holds, so let us suppose $m \geq 8/\beta$ for the remainder of the proof.
Without loss of generality, suppose $f^{*}(x) = 0$ everywhere and every $f \in \F$ is non-negative 
(otherwise subtract $f^{*}$ from every $f \in \F$ and redefine $\F$ as the absolute values of the differences; 
note that this transformation does not increase the value of $N_{\dev (1-\alpha)/2}(\beta/8)$ since applying this 
transformation to the original $N_{\dev (1-\alpha)/2}(\beta/8)$ functions remains a cover).

Let $X_{1},\ldots,X_{2m}$ be iid $P$-distributed.  Denote by $P_{m}$ the empirical measure induced by $X_{1},\ldots,X_{m}$, 
and by $P_{m}^{\prime}$ the empirical measure induced by $X_{m+1},\ldots,X_{2m}$.
We have 
\begin{align*}
& \P\!\left( \exists f \in \F : P_{m}^{\prime}( x : f(x) > \dev ) > \beta/2 \text{ and } P_{m}( x : f(x) \leq \alpha \dev ) = 1 \right)
\\ & \geq \P\left( \exists f \in \F : P( x : f(x) > \dev ) > \beta \text{ and } P_{m}( x : f(x) \leq \alpha \dev ) = 1 
\text{ and }  P_{m}^{\prime}( x : f(x) > \dev ) > \beta/2 \right).
\end{align*}
Denote by $A_{m}$ the event that there exists $f \in \F$ 
satisfying $P( x : f(x) > \dev ) > \beta$ and $P_{m}( x : f(x) \leq \alpha \dev ) = 1$, 
and on this event let $\tilde{f}$ denote such an $f \in \F$ (chosen solely based on $X_{1},\ldots,X_{m}$); 
when $A_{m}$ fails to hold, take $\tilde{f}$ to be some arbitrary fixed element of $\F$.
Then the expression on the right hand side above is at least as large as 
\begin{equation*}
\P\left( A_{m} \text{ and } P_{m}^{\prime}( x : \tilde{f}(x) > \dev ) > \beta/2 \right),
\end{equation*}
and noting that the event $A_{m}$ is independent of $X_{m+1},\ldots,X_{2m}$, this equals
\begin{equation}
\label{eqn:uc-factored-expression}
\E\!\left[ \ind_{A_{m}} \cdot \P\!\left( P_{m}^{\prime}( x : \tilde{f}(x) > \dev ) > \beta/2 \middle| X_{1},\ldots,X_{m} \right) \right].
\end{equation}
Then note that for any $f \in \F$ with $P( x : f(x) > \dev ) > \beta$, a Chernoff bound implies 
\begin{align*}
  &
  \P\Big(  P_{m}^{\prime}( x : f(x) > \dev ) > \beta/2 \Big) 
  \\ &
  = 1 - \P\Big( P_{m}^{\prime}( x : f(x) > \dev ) \leq \beta/2 \Big) 
\geq 1 - \exp\!\left\{ - m \beta / 8 \right\} 
\geq \frac{1}{2},
\end{align*}
where we have used the assumption that $m \geq \frac{8}{\beta}$ here.
In particular, this implies that the expression in \eqref{eqn:uc-factored-expression} is no smaller than 
$\frac{1}{2} \P(A_{m})$.
Altogether, we have established that 
\begin{align}
& \P\!\left( \exists f \in \F : P( x : f(x) > \dev ) > \beta \text{ and } P_{m}( x : f(x) \leq \alpha \dev ) =1 \right) \notag 
\\ & \leq 2 \P\!\left( \exists f \in \F : P_{m}^{\prime}( x : f(x) > \dev ) > \beta/2 \text{ and } P_{m}( x : f(x) \leq \alpha \dev ) = 1 \right). \label{eqn:uc-ghost-ub}
\end{align}

Now let $\sigma(1),\ldots,\sigma(m)$ be independent random variables (also independent of the data), with $\sigma(i) \sim {\rm Uniform}(\{i,m+i\})$,
and denote $\sigma(m+i)$ as the sole element of $\{i,m+i\} \setminus \{\sigma(i)\}$ for each $i \leq m$.
Also denote by $P_{m,\sigma}$ the empirical measure induced by $X_{\sigma(1)},\ldots,X_{\sigma(m)}$, 
and by $P_{m,\sigma}^{\prime}$ the empirical measure induced by $X_{\sigma(m+1)},\ldots,X_{\sigma(2m)}$.
By exchangeability of $(X_{1},\ldots,X_{2m})$, the right hand side of \eqref{eqn:uc-ghost-ub} is equal 
\begin{equation*}
\P\!\left( \exists f \in \F : P_{m,\sigma}^{\prime}( x : f(x) > \dev ) > \beta/2 \text{ and } P_{m,\sigma}( x : f(x) \leq \alpha \dev ) = 1 \right).
\end{equation*}
Now let $\hat{\F} \subseteq \F$ be a minimal subset of $\F$ such that 
$\max\limits_{f \in \F} \min\limits_{\hat{f} \in \hat{\F}} \rho_{\dev (1-\alpha)/2}(\hat{f},f) \leq \beta/8$.
The size of $\hat{\F}$ is at most $N_{\dev(1-\alpha)/2}(\beta/8)$, which is finite almost surely (since we have assumed above that its expectation is finite). 
Then note that (denoting by $X_{[2m]} = (X_{1},\ldots,X_{2m})$) the above expression is at most 
\begin{align}
& \P\!\left( \exists f \in \hat{\F} : P_{m,\sigma}^{\prime}( x : f(x) > \dev (1+\alpha)/2 ) > (3/8)\beta \text{ and } P_{m,\sigma}( x : f(x) > \dev (1+\alpha)/2 ) \leq \beta/8 \right) \notag 
\\ & \leq \E\!\left[ N_{\dev (1-\alpha)/2}(\beta/8) \max_{f \in \hat{\F}} \P\!\left( P_{m,\sigma}^{\prime}( x \!:\! f(x) \!>\! \dev (1+\alpha)/2 ) > (3/8)\beta \right.\right. \notag
\\ & {\hskip 4.5cm} \left. \phantom{\max_{f \in \hat{\F}}} \left. \text{ and } 
P_{m,\sigma}( x \!:\! f(x) \!>\! \dev (1+\alpha)/2 ) \leq \beta/8 \middle| X_{[2m]} \right) \right]\!. \label{eqn:uc-condition-on-data}
\end{align}
Then note that for any $f \in \F$, we have almost surely 
\begin{align*}
& \P\!\left( P_{m,\sigma}^{\prime}( x : f(x) > \dev (1+\alpha)/2 ) > (3/8)\beta \text{ and } P_{m,\sigma}( x : f(x) > \dev (1+\alpha)/2 ) \leq \beta/8 \middle| X_{[2m]} \right)
\\ & \leq \P\!\left( P_{2m}( x : f(x) > \dev (1+\alpha)/2 ) > (3/16) \beta \text{ and } P_{m,\sigma}( x : f(x) > \dev (1+\alpha)/2 ) \leq \beta/8 \middle| X_{[2m]} \right)
\\ & \leq \exp\!\left\{ - m \beta / 96 \right\},
\end{align*}
where the last inequality is by a Chernoff bound, which (as noted by \citet*{hoeffding}) remains valid even when sampling without replacement.
Together with \eqref{eqn:uc-ghost-ub} and \eqref{eqn:uc-condition-on-data}, we have that
\begin{align*}
& \P\!\left( \exists f \in \F : P( x : f(x) > \dev ) > \beta \text{ and } P_{m}( x : f(x) \leq \alpha \dev ) =1 \right) 
\\ & \leq 2 \E\!\left[ N_{\dev (1-\alpha)/2}(\beta/8) \right] e^{- m \beta / 96}.
\end{align*}
\end{proof}

\begin{lemma}
\label{lem:eps-gam-covering-numbers}
There exist universal numerical constants $c,c^{\prime} \in (0,\infty)$ such that $\forall \dev,\beta \in (0,1)$,  
\begin{equation*}
N_{\dev}(\beta) \leq \left(\frac{2}{\dev\beta}\right)^{c d(c^{\prime} \dev \beta)},
\end{equation*}
where $d(\cdot)$ is the fat-shattering dimension.
\end{lemma}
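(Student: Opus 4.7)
The plan is to reduce the $\rho_\dev$-covering problem to a standard $L_1$-covering problem via Markov's inequality, and then invoke a classical fat-shattering-based bound on $L_1$ covering numbers, combining the two scales $\dev$ and $\beta$ into the single scale $\dev\beta$ that appears in the claim.

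First I would observe that since every $f \in \F$ takes values in $[0,1]$, for any $f,g \in \F$ we have $|f-g| \le 1$ pointwise, so Markov's inequality applied to the empirical measure $P_{2m}$ gives
\[
\rho_\dev(f,g) \;=\; P_{2m}\bigl(|f-g| > \dev\bigr) \;\le\; \frac{\|f-g\|_{L_1(P_{2m})}}{\dev}.
\]
Consequently any $\dev\beta$-cover of $\F$ in the empirical $L_1(P_{2m})$ metric is automatically a $\beta$-cover in $\rho_\dev$, yielding $N_\dev(\beta) \le N_{L_1(P_{2m})}(\dev\beta,\F)$. (Equivalently, one can argue from the packing side: a $\beta$-packing in $\rho_\dev$ is a $\dev\beta$-packing in $L_1(P_{2m})$, since each pair of packed points disagrees by more than $\dev$ on at least a $\beta$-fraction of the $2m$ sample points.)

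Next I would apply the standard fat-shattering-based estimate stating that for any probability measure $\mu$ on $\X$ and any $\eta\in(0,1)$,
\[
N_{L_1(\mu)}(\eta,\F) \;\le\; \left(\frac{c_1}{\eta}\right)^{c_2\, d(c_3\eta)}
\]
for universal constants $c_1,c_2,c_3>0$. Substituting $\eta = \dev\beta$ and composing with the Markov step produces a bound of the claimed form, after absorbing $c_1,c_2,c_3$ into the constants $c,c'$ of the lemma.

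The hard part will be ensuring that the $L_1$ covering estimate used in the second step has no residual $\log(2m)$ factor in its exponent. A vanilla real-valued Sauer-Shelah bound of Alon-Ben-David-Cesa-Bianchi-Haussler type produces exactly such a factor, but it can be removed in a sample-size-independent fashion either by a discretize-and-iterate argument in the spirit of Bartlett and Long, or by appealing to Mendelson-Vershynin-type entropy estimates for scale-sensitive classes. The Markov reduction itself is elementary, so essentially all of the real technical content of the lemma sits in this ingredient.
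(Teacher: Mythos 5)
Your proposal is correct and follows essentially the same route as the paper: a Markov-inequality reduction from $\rho_{\dev}$ to an empirical-norm cover at scale $\dev\beta$, followed by the Mendelson--Vershynin log-free entropy bound in terms of the fat-shattering dimension (the paper states that bound for $L_{2}(P_{2m})$ and adds a Jensen step $\|\cdot\|_{L_1} \le \|\cdot\|_{L_2}$, whereas you work directly in $L_{1}(P_{2m})$ --- a cosmetic difference). Your remark about avoiding the residual $\log$ factor is exactly the point of invoking Mendelson--Vershynin, which is what the paper does.
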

\begin{proof}
   \citet[Theorem 1]{MR1965359} establishes that 
the $\dev\beta$-covering number of $\F$ under the $L_{2}(P_{2m})$ pseudo-metric 
is at most 
\begin{equation}
\label{eqn:mendelson-vershynin-bound}
\left(\frac{2}{\dev\beta}\right)^{c d(c^{\prime} \dev \beta)}
\end{equation}
for some universal numerical constants $c,c^{\prime} \in (0,\infty)$.
Then note that for any $f,g \in \F$, Markov's
and Jensen's inequalities imply
$\rho_{\dev}(f,g) \leq \frac{1}{\dev} \| f - g \|_{L_{1}(P_{2m})} \leq \frac{1}{\dev} \| f - g \|_{L_{2}(P_{2m})}$.
Thus, any $\dev\beta$-cover of $\F$ under $L_{2}(P_{2m})$ is also a $\beta$-cover of $\F$ under $\rho_{\dev}$,
and therefore \eqref{eqn:mendelson-vershynin-bound} is also a bound on $N_{\dev}(\beta)$.
\end{proof}


Combining the above two results yields the following theorem.

\begin{theorem}
\label{thm:eps-gam-weak-sample-complexity}
For some universal numerical constants $c_{1},c_{2},c_{3} \in (0,\infty)$, 
for any $\dev,\delta,\beta \in (0,1)$ and $\alpha \in [0,1)$, 
letting $X_{1},\ldots,X_{m}$ be iid $P$-distributed, where 
\begin{equation*}
m = \left\lceil \frac{c_{1}}{\beta} \left(  d(c_{2} \dev \beta (1-\alpha)) \ln\!\left( \frac{c_{3}}{\dev \beta (1-\alpha)} \right) + \ln\!\left(\frac{1}{\delta}\right) \right) \right\rceil, 
\end{equation*}
with probability at least $1-\delta$, every $f \in \F$ with $\max_{i \in [m]} |f(X_{i}) - f^{*}(X_{i})| \leq \alpha \dev$
satisfies $P( x : | f(x) - f^{*}(x) | > \dev ) \leq \beta$.
\end{theorem}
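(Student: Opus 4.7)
The plan is to combine Theorem~\ref{thm:weak-learning-bound} and Lemma~\ref{lem:eps-gam-covering-numbers} directly, and then solve the resulting inequality for $m$. The content of the theorem is essentially just a concrete sample-complexity estimate that falls out of plugging the fat-shattering covering-number bound into the symmetrization bound.

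First, I would invoke Theorem~\ref{thm:weak-learning-bound} with the given $\dev$, $\beta$, and $\alpha$. This yields that the bad event (existence of $f\in\F$ consistent up to slack $\alpha\dev$ on the sample but with $P(|f-f^*|>\dev)>\beta$) has probability at most
\[
2\,\E\!\left[ N_{\dev(1-\alpha)/2}(\beta/8) \right] e^{-m\beta/96}.
\]
Next, I would apply Lemma~\ref{lem:eps-gam-covering-numbers} pointwise (to the random covering number, using that the bound holds for every realization of $P_{2m}$), giving
\[
N_{\dev(1-\alpha)/2}(\beta/8) \;\le\; \left(\frac{32}{\dev(1-\alpha)\beta}\right)^{c\, d\!\left(c'\dev(1-\alpha)\beta/16\right)},
\]
where I have substituted $\dev \leftarrow \dev(1-\alpha)/2$ and $\beta \leftarrow \beta/8$ in the lemma. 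Since this bound is deterministic in $X_{1},\ldots,X_{2m}$, it also bounds the expectation.

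Then I would require the product to be at most $\delta$ and solve for $m$. Taking logarithms,
\[
m\;\ge\;\frac{96}{\beta}\left[\, c\,d\!\left(c'\dev(1-\alpha)\beta/16\right)\ln\!\left(\frac{32}{\dev(1-\alpha)\beta}\right) + \ln\!\frac{2}{\delta}\,\right]
\]
suffices. Absorbing numerical factors into the universal constants and slightly enlarging the logarithmic factor yields the claimed form with constants $c_{1},c_{2},c_{3}$.

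I do not expect any genuine obstacle here: the symmetrization is done in Theorem~\ref{thm:weak-learning-bound} and the covering number bound is done in Lemma~\ref{lem:eps-gam-covering-numbers}; the only thing to be careful about is bookkeeping of the constants (the $1/2$ inside $\dev(1-\alpha)/2$, the $1/8$ inside $N_{\cdot}(\beta/8)$, and the $\dev\beta$ appearing inside both the base and the exponent of the covering bound), and checking that these all collapse into the single expression $\dev\beta(1-\alpha)$ appearing in the statement. Once the inequality is assembled, the bound on $m$ is obtained by the standard ``solve $A\log B \le m\beta/96 + \log(\delta/2)$'' manipulation, with the $\log$ factor on $d(\cdot)$ arising from the base of the covering number estimate.
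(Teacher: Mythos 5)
Your proposal is correct and matches the paper's argument exactly: the paper's proof is precisely "combine Theorem~\ref{thm:weak-learning-bound} with Lemma~\ref{lem:eps-gam-covering-numbers} and solve for $m$," and your substitutions ($\dev\mapsto\dev(1-\alpha)/2$, $\beta\mapsto\beta/8$) and the resulting bound $m\geq\frac{96}{\beta}\bigl[c\,d(c'\dev(1-\alpha)\beta/16)\ln\frac{32}{\dev(1-\alpha)\beta}+\ln\frac{2}{\delta}\bigr]$ are the correct bookkeeping the paper leaves implicit.
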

\begin{proof}
The result follows immediately from combining Theorem~\ref{thm:weak-learning-bound} and Lemma~\ref{lem:eps-gam-covering-numbers}.
\end{proof}

In particular, Theorem~\ref{thm:gen-weak-learn} follows immediately from this result by taking $\beta = 1/2 - \adv$ and $\alpha = \adv/2$.

To discuss tightness of Theorem~\ref{thm:eps-gam-weak-sample-complexity}, we note that
\citet{DBLP:journals/siamcomp/Simon97} proved a sample complexity lower bound 
for the same criterion of 
\begin{equation*}
\Omega\!\left( \frac{d^{\prime}(c \dev)}{\beta} + \frac{1}{\beta} \log \frac{1}{\delta} \right),
\end{equation*}
where $d^{\prime}(\cdot)$ is a quantity somewhat smaller than the fat-shattering dimension, 
essentially representing a fat Natarajan dimension.
Thus, aside from the differences in the complexity measure (and a logarithmic factor), 
we establish an upper bound of a similar form to Simon's lower bound.
%

\section{From Boosting to Compression}
\label{subsec:boosting-to-compression}


Generally, our strategy for converting the boosting algorithm \algname{MedBoost} into a sample compression scheme of smaller size 
follows a strategy of Moran and Yehudayoff for binary classification, based on arguing that because the ensemble makes its predictions 
with a \emph{margin} (corresponding to the results on \emph{quantiles} in Corollary~\ref{cor:kegl-T-size}), 
it is possible to recover the same proximity guarantees for the predictions while using only a smaller \emph{subset} of the 
functions from the original ensemble.  Specifically, we use the following general \emph{sparsification} strategy.

For $\alpha_{1},\ldots,\alpha_{T} \in [0,1]$ with $\sum_{t=1}^{T} \alpha_{t} = 1$, 
denote by ${\rm Cat}(\alpha_{1},\ldots,\alpha_{T})$ the \emph{categorical distribution}: 
i.e., the discrete probability distribution on $\{1,\ldots,T\}$ with probability mass $\alpha_{t}$ on $t$.

\begin{algorithm}[h]
  \caption{\algname{Sparsify}($\{(x_i,y_i)\}_{i\in[m]},\adv,T,n$)}
  \begin{algorithmic}[1]
    \STATE Run \algname{MedBoost}($\{(x_i,y_i)\}_{i\in[m]},T,\adv,\dev$)
    \STATE Let $h_{1},\ldots,h_{T}$ and $\alpha_{1},\ldots,\alpha_{T}$ be its return values 
    \STATE Denote $\alpha_{t}^{\prime} = \alpha_{t} / \sum_{t^{\prime}=1}^{T} \alpha_{t^{\prime}}$ for each $t\in[T]$
    \REPEAT
      \STATE Sample $(J_{1},\ldots,J_{n}) \sim {\rm Cat}(\alpha_{1}^{\prime},\ldots,\alpha_{T}^{\prime})^{n}$ 
      \STATE Let $F = \{ h_{J_{1}},\ldots, h_{J_{n}} \}$
      \UNTIL {$\max_{1 \leq i \leq m} |\{f \in F: |f(x_{i}) - y_{i}| > \dev\}| < n/2$} 
      \STATE Return $F$
  \end{algorithmic}
\end{algorithm}

For any values $a_{1},\ldots,a_{n}$, denote the (unweighted) median 
\begin{equation*}
{\rm Med}(a_{1},\ldots,a_{n}) = {\rm Median}(a_{1},\ldots,a_{n}; 1,\ldots,1).
\end{equation*}
Our intention in dicussing the above algorithm is to argue that, for a sufficiently large choice of $n$, 
the above procedure returns a set $\{f_{1},\ldots,f_{n}\}$ such that 
\[
\forall i\in[m], | {\rm Med}(f_{1}(x_{i}),\ldots,f_{n}(x_{i})) - y_{i} | \leq \dev.
\]

We analyze this strategy separately for binary classification and real-valued functions, 
since the argument in the binary case is much simpler (and
demonstrates more directly
the connection to the 
original argument of Moran and Yehudayoff), and also because we arrive at a tighter result for binary functions 
than for real-valued functions.

\subsection{Binary Classification}
\label{subsubsec:binary-classification-compression}

We begin with the simple observation about binary classification (i.e., where the functions in $\F$ all map into $\{0,1\}$).
The technique here is quite simple, and follows a similar line of reasoning to the 
original argument of Moran and Yehudayoff.  The argument for real-valued functions 
below will diverge from this argument in several important ways, but the high level 
ideas remain the same.

The compression function is essentially the one introduced by Moran and Yehudayoff, 
except applied to the classifiers produced by the above \algname{Sparsify} procedure, 
rather than a set of functions selected by a minimax distribution over all classifiers produced by $O(d)$ samples each.
The weak hypotheses in \algname{MedBoost} for binary classification can be obtained using samples of size $O(d)$.
Thus, if the \algname{Sparsify} procedure is successful in finding $n$ such classifiers whose median predictions are 
within $\dev$ of the target $y_{i}$ values for all $i$, 
then we may encode these $n$ classifiers as a compression set, 
consisting of the set of $k = O(nd)$ samples used to train these classifiers, 
together with $k \log k$ extra bits to encode the order of the samples.\footnote{In fact, 
$k \log n$ bits would suffice if the weak learner is permutation-invariant in its data set.}
To obtain Theorem~\ref{thm:classification}, it then suffices to argue that $n=\Theta(d^{*})$ is a sufficient value.
The proof follows.

\begin{proof}[Proof of Theorem~\ref{thm:classification}]
Recall that $d^{*}$ bounds the VC dimension of the class of sets $\{ \{ h_{t} : t \leq T, h_{t}(x_i) = 1 \} : 1 \leq i \leq m \}$.
Thus for the iid samples $h_{J_{1}},\ldots,h_{J_{n}}$ obtained in \algname{Sparsify}, 
for $n = 64(2309 + 16d^*) > \frac{2304 + 16d^*+\log(2)}{1/8}$, 
by the
VC
uniform convergence inequality of 
\citet{MR0288823}, 
with probability at least $1/2$ 
we get that
\begin{equation*}
  \max_{1 \leq i \leq m} \left| \left( \frac{1}{n} \sum_{j=1}^{n} h_{J_{j}}(x_i) \right)  - \left( \sum_{t=1}^{T} \alpha^{\prime} h_{t}(x_i) \right) \right| < 1/8.
\end{equation*}
In particular, if we choose $\adv = 1/8$, $\dev = 1$, and $T = \Theta(\log(m))$ appropriately, 
then Corollary~\ref{cor:kegl-T-size} implies that 
every $y_i = \ind\!\left[ \sum_{t=1}^{T} \alpha^{\prime} h_{t}(x_i) \geq 1/2 \right]$ 
and $\left| \frac{1}{2} - \sum_{t=1}^{T} \alpha^{\prime} h_{t}(x_i) \right| \geq 1/8$ 
so that the above event would imply 
every $y_i = \ind\!\left[ \frac{1}{n} \sum_{j=1}^{n} h_{J_{j}}(x_i) \geq 1/2 \right] = {\rm Med}(h_{J_{1}}(x_i),\ldots,h_{J_{n}}(x_i))$.
Note that the \algname{Sparsify} algorithm need only try this sampling $\log_{2}(1/\delta)$ times to find such a set of $n$ functions.
Combined with the description above (from \citealp{DBLP:journals/jacm/MoranY16}) 
of how to encode this collection of $h_{J_{i}}$ functions as a sample compression set plus side information, 
this completes the construction of the sample compression scheme.
\end{proof}


\subsection{Real-Valued Functions}
\label{subsubsec:real-valued-compression}

%

Next we turn to the general case of real-valued functions (where the functions in $\F$ may generally map into $[0,1]$).
We have the following result, which says that the \algname{Sparsify} procedure can reduce the ensemble of functions 
from one with $T=O(\log(m)/\adv^2)$ functions in it, down to one with a number of functions \emph{independent of $m$}.

\begin{theorem}
\label{thm:real-sparsification}
Choosing $$n = \Theta\!\left( \frac{1}{\adv^{2}} d^{*}(c\dev) \log^{2}( d^{*}(c\dev)/\dev ) \right)$$ 
suffices for the \algname{Sparsify} procedure to return $\{f_{1},\ldots,f_{n}\}$ 
with $$\max_{1 \leq i \leq m} | {\rm Med}(f_{1}(x_{i}),\ldots,f_{n}(x_{i})) - y_{i} | \leq \dev.$$
\end{theorem}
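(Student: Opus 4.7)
The plan is to argue that the iid sample $h_{J_{1}},\ldots,h_{J_{n}}$ drawn inside \algname{Sparsify} from $\mathrm{Cat}(\alpha_{1}^{\prime},\ldots,\alpha_{T}^{\prime})$ reproduces, empirically, the quantile structure that \algname{MedBoost} guarantees under the weights $\alpha^{\prime}$. By Corollary~\ref{cor:kegl-T-size} applied with $T=\Theta(\log(m)/\adv^{2})$, for every $i\in[m]$ the $\alpha^{\prime}$-weighted mass of $t$'s with $h_{t}(x_{i}) > y_{i} + \dev/2$ is strictly below $1/2 - \adv/2$, and symmetrically for $h_{t}(x_{i}) < y_{i} - \dev/2$. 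If we can establish that for the random sample the corresponding \emph{empirical} fractions remain strictly below $1/2$, uniformly over $i\in[m]$ and over the two sides, then fewer than half of the sampled values can sit above $y_{i}+\dev/2$ and fewer than half below $y_{i}-\dev/2$, forcing the unweighted median into $[y_{i}-\dev/2,\, y_{i}+\dev/2] \subseteq [y_{i}-\dev,\, y_{i}+\dev]$. The \texttt{REPEAT} loop then succeeds with at least constant probability per draw and terminates in $O(1)$ expected rounds.

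The heart of the argument is thus one-sided uniform convergence for the $2m$ threshold events $\{t:h_{t}(x_{i}) > y_{i} + \dev/2\}$ and $\{t:h_{t}(x_{i}) < y_{i} - \dev/2\}$ under the distribution $\alpha^{\prime}$ on $[T]$. My plan is to view the evaluation maps $t\mapsto h_{t}(x_{i})$, $i\in[m]$, as elements of the restriction of the dual class $\F^{*}$ to the finite ground set $\{h_{1},\ldots,h_{T}\}$; this family inherits $\gamma$-fat-shattering dimension at most $d^{*}(\gamma)$. To avoid invoking the pseudo-dimension of $\F^{*}$ (which may be infinite), I would introduce a margin of $\dev/4$: the empirical threshold is placed at $y_{i}+3\dev/4$ (resp.\ $y_{i}-3\dev/4$) while the population threshold stays at $y_{i}+\dev/2$ (resp.\ $y_{i}-\dev/2$), and the hard indicators are sandwiched between Lipschitz ramps of slope $O(1/\dev)$ supported on this margin. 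By the Mendelson--Vershynin covering bound used in Lemma~\ref{lem:eps-gam-covering-numbers}, applied to $\F^{*}$ on this finite ground set, the $L_{2}(\alpha^{\prime})$-covering numbers of the resulting ramp class are of the form $(c/\dev)^{O(d^{*}(c^{\prime}\dev))}$.

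A Chernoff-plus-union-bound over this cover then gives the required one-sided $\adv/4$-deviation with probability at least $1/2$, once $n$ exceeds $\Theta(\adv^{-2}\, d^{*}(c\dev)\, \log^{2}(d^{*}(c\dev)/\dev))$; one logarithmic factor arises from the log of the covering number and the other from standard slack in covering-based union bounds. This matches the value of $n$ stated in the theorem, and with the deviation bound in hand the empirical median falls in $[y_{i}-3\dev/4,\,y_{i}+3\dev/4]$ for every $i$, comfortably within $\dev$ of $y_{i}$.

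I expect the main technical obstacle to be the scale-sensitive covering step itself, specifically ensuring that the scale at which $d^{*}$ is evaluated stays at $\Theta(\dev)$ and does not shrink with $\adv$. Choosing the Lipschitz-ramp width to be $\Theta(\dev)$ rather than $\Theta(\adv\dev)$ is what accomplishes this: it absorbs the additive $\adv$-error into the union bound over the cover rather than into the covering resolution, and so keeps the dimension factor $d^{*}(c\dev)$ instead of the much worse $d^{*}(c\adv\dev)$ that a naive chaining would produce.
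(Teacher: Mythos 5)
There is a genuine gap, and it sits exactly where you flagged it: the scale at which the dual dimension is evaluated. Your plan is to prove uniform convergence, over the $2m$ threshold events, of the empirical fractions (under the sample $J_1,\ldots,J_n$) to the $\alpha'$-weighted fractions, to within an additive $\Theta(\adv)$, by a Chernoff-plus-union bound over an $L_2(\alpha')$-cover of a Lipschitz-ramp class. For a union bound over cover centers to control deviations at precision $\adv/4$, the cover must have resolution $O(\adv)$ \emph{in ramp values}: if $\phi$ and $\phi'$ are only $\Theta(\dev)$-close (or even $\Theta(1)$-close) in $L_2(\alpha')$, knowing that $\phi'$ has small empirical deviation says nothing about $\phi$ at precision $\adv$. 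Since your ramps have slope $\Theta(1/\dev)$, resolution $O(\adv)$ in ramp values forces resolution $O(\adv\dev)$ in the underlying evaluation functions $t\mapsto h_t(x_i)$, and the Mendelson--Vershynin bound then yields a covering exponent $d^*(c\adv\dev)$, not $d^*(c\dev)$. Choosing the ramp width to be $\Theta(\dev)$ fixes the \emph{margin} between the population threshold ($y_i+\dev/2$) and the empirical threshold ($y_i+3\dev/4$), but it does not relax the covering resolution needed for the union bound; the two scales are different, and your proposed fix conflates them. (A related, repairable issue: the cover would in any case have to be taken in an empirical/symmetrized sense rather than under $L_2(\alpha')$ directly, and your accounting of the two logarithmic factors is not substantiated; but these are secondary to the scale problem.)

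The paper's proof avoids this trap by never asking for uniform $\Theta(\adv)$-accuracy of empirical fractions. It fixes one point $x_i$, partitions $[T]$ by the weighted quantiles $Q^{\pm}_{\adv}$, and applies Chernoff at deviation $\adv$ to conclude $|{\rm Med}(\tilde h_1(x_i),\ldots,\tilde h_k(x_i)) - f^*(x_i)|\le\dev/2$ except with probability $2e^{-2k\adv^2}$; uniformity over $i\in[m]$ is then obtained by a union bound over an $\ell_\infty$-cover at scale $\dev/2$ of the points $i\mapsto(\tilde h_1(x_i),\ldots,\tilde h_k(x_i))$, viewed as elements of the dual class. The crucial ingredient is the $L_\infty$ covering bound $\log N_\infty \le C\, d^*(c\dev)\log^2(k/\dev)$ of Rudelson--Vershynin, whose scale is $\Theta(\dev)$ and independent of $\adv$; and sup-norm proximity transfers the median guarantee with a loss of $\dev/2$ \emph{in value} and no loss in the probability/fraction scale (the median is $1$-Lipschitz under $\ell_\infty$). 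By contrast, an $L_1$ or $L_2$ cover at scale $\epsilon$ only transfers threshold-fraction guarantees with an additive loss of order $\epsilon/\dev$ in the fraction itself, which must be $O(\adv)$ and hence drives the resolution back down to $\Theta(\adv\dev)$. To make your route deliver the stated $n$, you would have to replace your $L_2$ cover by an $\ell_\infty$-type cover of the dual class at scale $\Theta(\dev)$ with shifted thresholds --- at which point you have essentially reconstructed the paper's argument.
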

\begin{proof}
%
%
Recall from Corollary~\ref{cor:kegl-T-size} that 
\algname{MedBoost} returns functions $h_{1},\ldots,h_{T} \in \F$ and $\alpha_{1},\ldots,\alpha_{T} \geq 0$ 
such that $\forall i \in \{1,\ldots,m\}$, 
\begin{equation*}
\max\!\left\{ \left| Q_{\adv/2}^{+}(x_{i}) - y_{i} \right|, \left| Q_{\adv/2}^{-}(x_{i}) - y_{i} \right| \right\} \leq \dev/2,
\end{equation*}
where $\{(x_{i},y_{i})\}_{i=1}^{m}$ is the training data set.

We use this 
property to sparsify $h_{1},\ldots,h_{T}$ from $T=O(\log(m)/\adv^2)$
down to $k$
elements, where $k$ will depend on $\dev,\adv$, and
the dual fat-shattering
dimension of $\F$
(actually, just of $H = \{h_{1},\ldots,h_{T}\}\subseteq\F$)
--- but {\bf not} sample size $m$.

Letting $\alpha_{j}^{\prime} = \alpha_{j} / \sum_{t=1}^{T} \alpha_{t}$ for each $j \leq T$, 
we will sample $k$ hypotheses $\set{\tilde h_1,\ldots,\tilde h_k}=:\tilde H\subseteq H$ 
with each $\tilde{h}_{i} = h_{J_{i}}$, where $(J_{1},\ldots,J_{k}) \sim {\rm Cat}(\alpha_{1}^{\prime},\ldots,\alpha_{T}^{\prime})^{k}$ as in \algname{Sparsify}.
Define a function $\hat{h}(x) = {\rm Med}(\tilde{h}_{1}(x),\ldots,\tilde{h}_{k}(x))$.  
We claim that for any fixed $i\in[m]$,
with high probability
\begin{equation}
  \label{eq:Htil-med}
|\hat{h}(x_i)-f^*(x_i)|\le\dev/2.
\end{equation}
\begin{sloppypar}
Indeed,
partition the indices $[T]$ into the disjoint sets
\begin{align*}
L(x) &= \set{ j \in[ T] : h_{j}(x) < Q_{\adv}^{-}(h_{1}(x),\ldots,h_{T}(x);\alpha_{1},\ldots,\alpha_{T}) },\\
M(x) &= \set{ j \in[ T] : Q_{\adv}^{-}(h_{1}(x),...,h_{T}(x);\alpha_{1},...,\alpha_{T}) \leq\! h_{j}(x) \!\leq Q_{\adv}^{+}(h_{1}(x),...,h_{T}(x);\alpha_{1},...,\alpha_{T}) },\\
R(x) &= \set{ j \in[ T] : h_{j}(x) > Q_{\adv}^{+}(h_{1}(x),\ldots,h_{T}(x);\alpha_{1},\ldots,\alpha_{T}) }.
\end{align*}
Then 
the only way
\eqref{eq:Htil-med}
can fail is if half or more indices 
$J_{1},\ldots,J_{k}$ sampled fall into $R(x_i)$ --- or if half or more fall into $L(x_i)$.
Since the sampling distribution puts mass less than $1/2-\adv$ 
on each of $R(x_i)$ and $L(x_i)$, 
Chernoff's bound puts an upper estimate of
$\exp(-2k\adv^2)$ on either event.
Hence,
\begin{equation}
  \label{eq:Htil-med-prob}
  \mathbb{P}\paren{|\hat{h}(x_i)-f^*(x_i)|>\dev/2}
  \le2\exp(-2k\adv^2).
\end{equation}
\end{sloppypar}

Next, our goal is to ensure that
with high probability,
\eqref{eq:Htil-med} holds simultaneously for all
${i\in[m]}$.
Define the map $\bxi:[m]\to\mathbb{R}^k$
by $\bxi(i)=(\tilde{h}_{1}(x_i),\ldots,\tilde{h}_{k}(x_i))$.
Let
$G \subseteq[m]$
be a minimal subset of $[m]$ such that 
$$\max\limits_{i \in [m]} \min\limits_{j \in {G}}
\nrm{\bxi(i)-\bxi(j)}_\infty\le\dev/2
.
$$
This is just a minimal $\ell_\infty$ covering of $[m]$.
Then
\begin{align*}
  &\P\paren{
    \exists
    i
    \in[m]
    :
    |{\rm Med}(\bxi(i))-f^*(x_i)|>\dev
  }\le\\
  &\sum_{j\in G}
  \P\paren{
    \exists i
    :
    |{\rm Med}(\bxi(i))-f^*(x_i)|>\dev , \nrm{\bxi(i)-\bxi(j)}_\infty\le\dev/2
  }
  \le\\
  &
  \sum_{j\in G}
  \P\paren{
    |{\rm Med}(\bxi(j))-f^*(x_j)|>\dev /2
  }
  \le 2N_\infty([m],\dev/2)\exp(-2k\adv^2),
\end{align*}
where $N_\infty([m],\dev/2)$ is the $\dev /2$-covering number (under $\ell_\infty$)
of $[m]$,
and we used the fact that
$$|{\rm Med}(\bxi(i))-{\rm Med}(\bxi(j))|\le\nrm{\bxi(i)-\bxi(j)}_\infty.$$
Finally, to bound $N_\infty([m],\dev/2)$, note that
$\bxi$
embeds
$[m]$
into the dual
class
$\F^*$.
Thus, we may apply
the bound in \cite[Display (1.4)]{MR2247969}:
$$
\log N_\infty([m],\dev/2)\le C d^*(c\dev)\log^2(k/\dev),
$$
where $C,c$ are universal constants and $d^*(\cdot)$
is the dual fat-shattering dimension of $\F$.
It now only remains to choose a $k$
that makes
$\exp\paren{C d^*(c\dev)\log^2(k/\dev)-2k\adv^2}$
as small as desired.
\end{proof}

To establish Theorem~\ref{thm:regression}, 
we use the weak learner from above, with the booster \algname{MedBoost} from K\'{e}gl, and then apply the \algname{Sparsify} procedure. 
Combining the corresponding theorems, together with the same technique for converting to a compression scheme 
discussed above for classification (i.e., encoding the functions with the set of training examples they were obtained from, plus extra bits 
to record the order and which examples which weak hypothesis was obtained by training on), 
this immediately yields the result claimed in Theorem~\ref{thm:regression}, 
which represents our main new result for sample compression of general families of real-valued functions.


\section{Sample compression for BV functions}
\label{sec:BV}
\newcommand{\gammat}{t}
The function class $\mathrm{BV}(v)$ consists of all $f:[0,1]\to\R$ for which
\beq
V(f):=\sup_{n\in\N}\sup_{0=x_0<x_1<\ldots<x_n=1}\sum_{i=1}^{n-1}|f(x_{i+1})-f(x_i)| \le v.
\eeq
It is known
\citep[Theorem 11.12]{MR1741038} that $d_{\mathrm{BV}(v)}(t)=1+\floor{v/(2t)}$.
In Theorem~\ref{thm:bv-dual} below, we show that the dual class has
$d^*_{\mathrm{BV}(v)}(t)=\Theta\left(\log(v/t)\right)$.
\citet{DBLP:journals/iandc/Long04} presented an efficient, proper, consistent learner
for the class $\F=\mathrm{BV}(1)$ with range restricted to $[0,1]$,
with sample complexity
$m_\F(\eps,\delta)=O(\frac1\eps\log\frac1\delta)$.
Combined with Theorem~\ref{thm:regression}, this yields
\begin{corollary}
  Let $\F=\mathrm{BV}(1)\cap[0,1]^{[0,1]}$
  be the class
  $f:[0,1] \to [0,1]$ with $V(f)\le1$.
  Then the proper, consistent learner $\calL$
  of \citet{DBLP:journals/iandc/Long04},
  with target generalization error $\eps$,
  admits
  a sample compression scheme of size $O(k\log k)$, where
  \[k = \bigO{ \uf{\eps} \log^2 \uf{\eps} \cdot \log \left(\uf{\eps} \log \uf{\eps}\right)  }.\]
The compression set is computable in expected runtime
  \[
    \bigO{  n \uf{\eps^{3.38}} \log^{3.38} \uf{\eps}
      \left(
        \log n + \log \uf{\eps} \log
        \left(
          \uf{\eps} \log \uf{\eps}
        \right)
        \right) }
    .
  \]
\hide{
  if we denote $\uf{\widetilde{\eps}} := \uf{\eps} \log \uf{\eps}$
  we get that
  \[k \in \bigO{ \uf{\widetilde{\eps}} \log \uf{\widetilde{\eps}} \log \uf{\eps}}\]
  And the expected run-time is
  \[\bigO{ \frac{n}{\widetilde{\eps}^{3.38}}
      \left(
        \log n + \log \uf{\eps} \log \uf{\widetilde{\eps}}
        \right) }\]
  }
\end{corollary}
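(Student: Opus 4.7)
The plan is a direct instantiation of Theorem~\ref{thm:regression} on $\F = \mathrm{BV}(1)\cap[0,1]^{[0,1]}$, with Long's proper consistent learner $\calL$ playing the role of the ERM oracle; everything else is substitution. The two scale-sensitive complexity inputs needed by Theorem~\ref{thm:regression} are already at hand. From Theorem~11.12 of MR1741038 one reads off $d_{\mathrm{BV}(1)}(t) = 1 + \lfloor 1/(2t)\rfloor$, so at scale $c\eps$ this gives $d(c\eps) = \Theta(1/\eps)$; and Theorem~\ref{thm:bv-dual} (announced in the paragraph preceding this corollary) gives $d^*_{\mathrm{BV}(1)}(t) = \Theta(\log(1/t))$, so $d^*(c\eps) = \Theta(\log(1/\eps))$.

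Plugging these into the two intermediate parameters in Theorem~\ref{thm:regression} yields
\[
\tilde m \;=\; O(d(c\eps)\log(1/\eps)) \;=\; O\!\left(\tfrac{1}{\eps}\log\tfrac{1}{\eps}\right),
\]
\[
k_{0} \;=\; O(d^*(c\eps)\log(d^*(c\eps)/\eps)) \;=\; O\!\left(\log\tfrac{1}{\eps}\,\log\!\left(\tfrac{1}{\eps}\log\tfrac{1}{\eps}\right)\right).
\]
Setting $k := k_{0}\tilde m$, the compression-size bound $O(k_{0}\tilde m\log(k_{0}\tilde m))$ promised by Theorem~\ref{thm:regression} is precisely $O(k\log k)$, with
\[
k \;=\; O\!\left(\tfrac{1}{\eps}\log^{2}\tfrac{1}{\eps}\,\log\!\left(\tfrac{1}{\eps}\log\tfrac{1}{\eps}\right)\right),
\]
which is exactly the size claimed.

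The rest is bookkeeping. For the runtime I would substitute Long's per-call costs $\tlearn(\ell)$ and $\teval(\ell)$, both polynomial in $\ell$ with leading exponent $3.38$ (the numerical value being imported from Long's paper), into Theorem~\ref{thm:regression}'s bound $O(m\,\teval(\tilde m)(k_{0} + \log m) + \tlearn(\tilde m)\log m)$. At $\ell = \tilde m = O((1/\eps)\log(1/\eps))$ this produces a factor $\tilde m^{3.38} = O(\eps^{-3.38}\log^{3.38}(1/\eps))$; multiplying by $m = n$ and by $(k_{0} + \log n) = O(\log n + \log(1/\eps)\log((1/\eps)\log(1/\eps)))$ yields exactly the displayed expression, and the $\tlearn(\tilde m)\log m$ summand is absorbed into the first term because the $\log n$ factor already appears there. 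The one point to verify rather than compute is that Long's algorithm really satisfies the consistent-learner hypothesis of Theorem~\ref{thm:regression}, i.e., on any sample labeled by some $f^* \in \F$ it returns an $\hat f \in \F$ with $\hat f(x_i) = f^*(x_i)$ at every sample point; this holds because Long's algorithm is proper and the target $f^*$ itself certifies feasibility of zero in-sample error, after which nothing but algebra remains (including the cosmetic rewriting $\log(d^*(c\eps)/\eps) = \Theta(\log((1/\eps)\log(1/\eps)))$ to land on the exact form of the displayed bound).
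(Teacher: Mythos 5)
Your proposal is correct and is precisely the argument the paper intends — the paper's entire ``proof'' of this corollary is the sentence ``Combined with Theorem~\ref{thm:regression}, this yields,'' and you have simply carried out that instantiation: read off $d(c\eps)=\Theta(1/\eps)$ from the cited primal fat-shattering bound and $d^*(c\eps)=\Theta(\log(1/\eps))$ from Theorem~\ref{thm:bv-dual}, substitute into $\tilde m$ and the theorem's $k$ (your $k_0$), set $k=k_0\tilde m$ so that the theorem's $O(k_0\tilde m\log(k_0\tilde m))$ becomes the stated $O(k\log k)$, and plug Long's $3.38$-exponent costs into the runtime expression. The arithmetic all checks, including the observation that the $\tlearn(\tilde m)\log m$ summand is dominated by the first term.
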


\hide{
We show two specific applications of the above algorithm in cases where the dual fat-shattering dimension
is relatively low.

Recall the bounded-variation property for $f:[0,1] \to [0,1]$ functions

\begin{definition}
  definition of bounded variation
\end{definition}

Let $F_V$ be the class of function $f:[0,1] \to [0,1]$ whose variation is bounded by $V$.
and let $F_V^*$ be the dual function class, namely, $[0,1]$ equipped with the evaluation defined by
\[\forall x \in [0,1],f \in F_V : x(f) := f(x)\]
}

The remainder of this section is devoted to proving
\begin{theorem}
  \label{thm:bv-dual}
  For $\F=\mathrm{BV}(v)$
  and $t<v$,
  we have $d^*_\F(t)=\Theta\left(\log(v/t)\right)$.
  \hide{
  Let $F_V$ be the class of function $f:[0,1] \to [0,1]$ whose variation is bounded by $V$.
  and let $F_V^*$ be the dual function class, namely, $[0,1]$ equipped with the evaluation defined by
  \[\forall x \in [0,1],f \in F_V : x(f) := f(x)\]
  we get that
  \[VC(F_V^*) \in \Theta\left(\log_2(V/\gammat)\right)\]
  }
\end{theorem}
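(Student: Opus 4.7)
}

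The plan is to establish the upper and lower bounds on $d^{*}_{\F}(t)$ separately. For the upper bound $d^{*}_{\F}(t)=O(\log(v/t))$, I would fix a dual $t$-shattered set $\{f_{1},\ldots,f_{k}\}\subset\mathrm{BV}(v)$ with witness $r\in\R^{k}$ and shattering points $\{x_{\sigma}:\sigma\in\{-1,+1\}^{k}\}\subset[0,1]$, so that $\sigma_{i}(f_{i}(x_{\sigma})-r_{i})\ge t$ for every $i\in[k]$ and every $\sigma$. Since all $2^{k}$ points $x_{\sigma}$ lie in $[0,1]$, sort them to obtain an enumeration $\sigma^{(1)},\ldots,\sigma^{(2^{k})}$ of $\{-1,+1\}^{k}$ in increasing order of $x_{\sigma}$.

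For each coordinate $i\in[k]$, let $N_{i}$ denote the number of indices $j$ for which $\sigma^{(j)}_{i}\neq\sigma^{(j+1)}_{i}$. At every such $j$, the values $f_{i}(x_{\sigma^{(j)}})$ and $f_{i}(x_{\sigma^{(j+1)}})$ lie on opposite sides of $r_{i}$ at distance $\ge t$, so consecutive evaluations differ by at least $2t$ and these $N_{i}$ oscillations contribute disjointly to the total variation, giving $V(f_{i})\ge 2tN_{i}$. Summing over $i$,
\begin{equation*}
\sum_{i=1}^{k}V(f_{i})\;\ge\;2t\sum_{j=1}^{2^{k}-1}d_{H}\!\left(\sigma^{(j)},\sigma^{(j+1)}\right)\;\ge\;2t(2^{k}-1),
\end{equation*}
since consecutive elements of any Hamiltonian enumeration of $\{-1,+1\}^{k}$ have Hamming distance at least $1$. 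Combined with $V(f_{i})\le v$ this yields $2^{k}-1\le kv/(2t)$, and a routine bootstrap of $k\le\log_{2}(kv/(2t)+1)$ gives $k=O(\log(v/t))$.

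For the lower bound $d^{*}_{\F}(t)=\Omega(\log(v/t))$, I would take $k=\lfloor\log_{2}(v/t)\rfloor$, partition $[0,1]$ into $2^{k}$ consecutive subintervals $I_{1},\ldots,I_{2^{k}}$, and label them with $\{0,1\}^{k}$ in binary-reflected Gray-code order. Define $f_{i}:[0,1]\to\R$ by setting $f_{i}\equiv t$ on $I_{j}$ when the $i$-th bit of the label of $I_{j}$ is $1$, and $f_{i}\equiv -t$ otherwise. The standard Gray-code count shows that the $i$-th bit flips exactly $2^{k-i-1}$ times across the enumeration (bit $0$ flipping the most, $2^{k-1}$ times), so $V(f_{i})\le 2t\cdot 2^{k-1}=t\cdot 2^{k}\le v$, putting each $f_{i}$ in $\mathrm{BV}(v)$. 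Choosing $x_{\sigma}$ to be any point of the interval whose Gray-code label equals $(\sigma+1)/2$ and $r=0$ gives $\sigma_{i}(f_{i}(x_{\sigma})-0)=t$ for every $\sigma$ and every $i$, so $\{f_{1},\ldots,f_{k}\}$ is dually $t$-shattered.

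The main delicate step is the upper bound: one must be careful that the $N_{i}$ oscillations of $f_{i}$ really do contribute disjointly to $V(f_{i})$, which is where the ordering-by-position of the $x_{\sigma}$ is essential (inserting these $2^{k}$ sample points into the sup that defines $V(f_{i})$ is legitimate precisely because they are ordered in $[0,1]$). The lower bound is essentially a counting exercise on the Gray code, and the only reason the logarithm appears in the final bound rather than $v/t$ itself is that the dominant bit flips $2^{k-1}$ times; any cheaper labelling scheme would do no better in the worst bit, which is consistent with the matching upper bound.
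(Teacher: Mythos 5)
Your proof is correct and follows essentially the same strategy as the paper's: for the upper bound, sort the shattering points by position in $[0,1]$ and count bit flips between consecutive sign patterns (you sum $V(f_i)$ over all coordinates and divide by $k$, where the paper invokes a pigeonhole to find a single high-variation coordinate; the two are arithmetically equivalent), and for the lower bound, build the $f_i$'s from a Gray-code labelling of a uniform partition of $[0,1]$. The one small difference is that the paper uses a \emph{balanced} Gray code to equalize the flip counts across bits (at most $2^{k}/k$ per bit), whereas you use the standard binary-reflected code and simply absorb the worst bit's $2^{k-1}$ flips via $2t\cdot2^{k-1}=t\cdot2^{k}\le v$; both work, and yours is marginally simpler and avoids the paper's extra assumption $4t<v$ needed to control the balanced-code bound.
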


First, we define some preliminary notions:

\begin{definition}
  For a binary $m \times n$ matrix $M$, define
  \beq
    V(M, i) &:=& \sum_{j=1}^m \ind[M_{j,i} \neq M_{j+1,i}],\\
    G(M) &:=& \sum_{i=1}^n V(M,i),\\
    V(M) &:=& \max_{i \in [n]}V(M,i).
\eeq
\end{definition}

\begin{lemma} \label{lemma:1}
  Let $M$ be a binary $2^n \times n$ matrix.
  If
for each
$b \in \{0,1\}^n$
there is a row $j$ in $M$
equal to $b$,
then
\[V(M) \geq \frac{2^n}{n}.\]
In particular,
for at least one row $i$,
we have
$V(M,i) \geq 2^n/n$.
\end{lemma}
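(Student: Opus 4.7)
The plan is to pigeonhole the maximum column-flip count $V(M)$ against its column-sum $G(M)$, and then lower-bound $G(M)$ by interpreting it as the total Hamming distance traversed along the (cyclically read) sequence of rows of $M$.

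First I would swap the order of summation in the definition of $G(M)$. Reading the index $j+1$ cyclically modulo $m = 2^n$ (which is the natural way to make sense of the $j=m$ term in the definition of $V(M,i)$), I get
\[
G(M) \;=\; \sum_{i=1}^n V(M,i) \;=\; \sum_{j=1}^{m} \sum_{i=1}^{n} \ind[M_{j,i} \neq M_{j+1,i}] \;=\; \sum_{j=1}^{m} d_H(M_j, M_{j+1}),
\]
where $M_j$ denotes the $j$-th row and $d_H$ is Hamming distance between row vectors. So $G(M)$ is exactly the length of the closed walk through the rows, measured in the Hamming metric on $\{0,1\}^n$.

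Next I would exploit the hypothesis: since every $b \in \{0,1\}^n$ occurs as some row and $M$ has exactly $2^n$ rows, the rows are pairwise distinct. In particular any two consecutive rows satisfy $d_H(M_j, M_{j+1}) \geq 1$, and summing over the $m = 2^n$ cyclic consecutive pairs yields $G(M) \geq 2^n$.

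Finally, because $G(M)$ is a sum of $n$ nonnegative terms $V(M,i)$ with total at least $2^n$, pigeonhole produces some index $i \in [n]$ with $V(M,i) \geq 2^n/n$, i.e.\ $V(M) \geq 2^n/n$, giving both conclusions of the lemma simultaneously. The only real subtlety is the indexing convention: under the non-cyclic reading $\sum_{j=1}^{m-1}$ one instead gets $G(M) \geq 2^n - 1$, which still yields $V(M) \geq 2^n/n$ by integrality of $V(M,i)$ for all $n \geq 2$. I do not anticipate any obstacle beyond this bookkeeping.
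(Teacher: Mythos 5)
Your proof is correct and follows essentially the same strategy as the paper's: lower-bound $G(M) = \sum_i V(M,i)$ by summing adjacent-row Hamming distances (using that all $2^n$ rows are distinct), then pigeonhole to extract a column with large $V(M,i)$. One thing worth flagging is that you are actually \emph{more} careful than the paper on the off-by-one: the paper's own proof sums over only $2^n - 1$ adjacent pairs, concludes $G(M) \geq 2^n - 1$, and hence $V(M) \geq (2^n-1)/n$, but then silently asserts the stated bound $V(M) \geq 2^n/n$ without reconciling the discrepancy. Your cyclic reading of the index $j+1$ (which is also the only way the $j=m$ term in the paper's definition of $V(M,i)$ is meaningful) gives $G(M) \geq 2^n$ directly; and your fallback via integrality of $V(M,i)$ also closes the gap for $n \geq 2$, since $n \mid 2^n - 1$ fails for every $n>1$. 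So your argument is a genuine small improvement over the proof as written, even though the underlying idea is the same.
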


\begin{proof}
  Let $M$ be a $2^n \times n$
  binary
  such that
for each
$b \in \{0,1\}^n$
there is a row $j$ in $M$
equal to $b$.  
%
Given $M$'s dimensions,
every
$b \in \{0,1\}^n$ appears exactly in one row of $M$,
and hence
the minimal Hamming distance between two rows is 1.
Summing over the $2^n-1$ adjacent row pairs, we have
\[G(M) = \sum_{i=1}^n V(M,i) = \sum_{i=1}^n\sum_{j=1}^m \ind[M_{j,i} \neq M_{j+1,i}] \geq 2^n-1,\]
which averages to
\[\frac{1}{n} \sum_{i=1}^n V(M,i) = \frac{G(M)}{n} \geq \frac{2^n-1}{n} .\]
By
the pigeon-hole principle,
there must be
a row $j \in [n]$ for which
$V(M,i) \geq \frac{2^n-1}{n}$,
which implies
$V(M)  \geq \frac{2^n-1}{n}$.
\end{proof}

We split the proof of Theorem~\ref{thm:bv-dual} into two estimates:


\begin{lemma} \label{lem:bv-ub}
  For $\F=\mathrm{BV}(v)$ and $t<v$,
  $d^*_\F(t)\le2\log_2(v/\gammat)$.
\end{lemma}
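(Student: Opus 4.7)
My plan is to convert a dual $t$-fat-shattered configuration into a forced lower bound on the variation of one of the shattering functions, and then invoke the BV constraint to bound the shattered-set size.  Suppose $\{f_1,\ldots,f_n\}\subseteq\mathrm{BV}(v)$ is $t$-fat-shattered in the dual sense, witnessed by offsets $r_1,\ldots,r_n\in\R$: for every sign pattern $b\in\{0,1\}^n$ there is a point $x_b\in[0,1]$ such that $f_i(x_b)\ge r_i+t$ when $b_i=1$ and $f_i(x_b)\le r_i-t$ when $b_i=0$.

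The first step is to sort the $2^n$ witnesses $x_b$ along $[0,1]$ and build the matrix $M\in\{0,1\}^{2^n\times n}$ whose $j$-th row is the sign pattern of the $j$-th sorted witness.  Every $b\in\{0,1\}^n$ appears as a row exactly once, so Lemma~\ref{lemma:1} supplies a column $i^\ast$ with $V(M,i^\ast)\ge 2^n/n$.  Each flip $M_{j,i^\ast}\neq M_{j+1,i^\ast}$ between consecutive sorted witnesses means $f_{i^\ast}$ traverses from one side of $r_{i^\ast}$ by at least $t$ to the other side by at least $t$, contributing $\ge 2t$ to the total variation of $f_{i^\ast}$.  Hence
\[
v \;\ge\; V(f_{i^\ast}) \;\ge\; 2t\cdot V(M,i^\ast) \;\ge\; \frac{2t\cdot 2^n}{n},
\]
which rearranges to $2^n\le nv/(2t)$.

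Taking base-$2$ logarithms gives $n-\log_2 n\le\log_2(v/t)-1$; for $n\ge 4$ the elementary estimate $\log_2 n\le n/2$ then yields $n\le 2\log_2(v/t)$, and the remaining cases $n\le 3$ are immediate whenever $v/t$ exceeds a small absolute constant (otherwise the bound $2\log_2(v/t)$ is either vacuous or trivially met).  The bookkeeping---checking that every sign pattern appears as a row of $M$ (forced by dual shattering), and that sorting the witnesses preserves the flip-to-variation correspondence---is the only step requiring genuine care, since Lemma~\ref{lemma:1} does the combinatorial heavy lifting and the closing arithmetic is elementary.  I do not anticipate any substantive obstacle beyond writing these details cleanly.
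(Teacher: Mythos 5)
Your proposal is correct and follows essentially the same route as the paper: sort the $2^n$ dual-shattering witnesses, form the sign-pattern matrix, apply Lemma~\ref{lemma:1} to get a column with at least $2^n/n$ flips, convert each flip into a $2t$ contribution to the variation of the corresponding $f_{i^\ast}$, and conclude $v \ge 2t\cdot 2^n/n$. The only difference is cosmetic: the paper closes by noting $2^{n+1}/n \ge 2^{n/2}$ for all $n\ge 1$, which avoids your case split at $n\le 3$ (those cases do check out directly from $2^n \le nv/(2t)$ anyway).
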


\begin{lemma} \label{lem:bv-lb}
  For $\F=\mathrm{BV}(v)$ and $4t<v$,
  $d^*_\F(t)\ge\floor{\log_2(v/\gammat)}$.
\end{lemma}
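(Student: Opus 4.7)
The plan is to exhibit $k := \floor{\log_2(v/t)}$ functions $f_1,\ldots,f_k \in \mathrm{BV}(v)$ whose joint evaluation vectors $(f_1(x),\ldots,f_k(x))$, as $x$ ranges over $[0,1]$, hit every sign pattern in $\set{-t,+t}^k$. This is precisely the condition that the dual class $\F^*$ $t$-shatters $\set{f_1,\ldots,f_k}$ with offset $r=\zero$, so it delivers $d^*_\F(t)\ge k$. The one nontrivial design choice is how to arrange the $2^k$ sign patterns along $[0,1]$ so that each coordinate function keeps its total variation under control; I will use a reflected binary Gray code.

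Concretely, enumerate $\set{0,1}^k$ as $b^{(1)},\ldots,b^{(2^k)}$ via the standard $k$-bit reflected binary Gray code, so consecutive strings differ in exactly one coordinate. Partition $[0,1]$ into $2^k$ consecutive subintervals $I_1,\ldots,I_{2^k}$, and for each $j\in[k]$ let $f_j:[0,1]\to\R$ be the piecewise-constant function whose value on $I_\ell$ is $+t$ if $b^{(\ell)}_j=1$ and $-t$ otherwise. Each discontinuity of $f_j$ contributes exactly $2t$ to its total variation, so $V(f_j)=2t\cdot c_j$, where $c_j$ counts the indices $\ell<2^k$ with $b^{(\ell)}_j\neq b^{(\ell+1)}_j$.

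The key property of the reflected binary Gray code is that each coordinate flips at most $2^{k-1}$ times along the code (the maximum being attained by the least significant bit; other coordinates flip strictly less often). Hence $V(f_j)\le 2^k t\le v$ by the definition of $k$, so every $f_j$ lies in $\mathrm{BV}(v)$. For the shattering, for each $b\in\set{0,1}^k$ pick any $x_b\in I_\ell$ with $b^{(\ell)}=b$; then $(f_1(x_b),\ldots,f_k(x_b))=t\cdot(2b-\unit)$, and as $b$ ranges over $\set{0,1}^k$ these vectors exhaust $\set{-t,+t}^k$. This is the required dual $t$-shattering and gives $d^*_\F(t)\ge\floor{\log_2(v/t)}$.

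No step is a serious obstacle; the hypothesis $4t<v$ enters only to guarantee $\floor{\log_2(v/t)}\ge 2$, so that the lower bound is non-trivial. It is worth remarking on the role of the Gray-code ordering: with lexicographic enumeration the least-significant coordinate would flip $2^k-1$ times, giving variation roughly $2^{k+1}t$ and costing one bit in the final exponent; Gray code halves that count and recovers the clean bound $\floor{\log_2(v/t)}$.
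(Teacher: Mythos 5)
Your proof is correct, and at the level of the main idea it is the same construction the paper uses: order the $2^k$ sign patterns along $[0,1]$ by a Gray code, turn each coordinate into a piecewise-constant $\pm t$ function, and realize the dual $t$-shattering with offset $r=\zero$. The one genuine difference is the choice of code, and it is worth comparing. The paper invokes a \emph{balanced} Gray code, in which every coordinate flips roughly $2^k/k$ times, so each $f_i$ has variation about $2t\cdot 2^k/k$, comfortably below $v$ (this is also where the paper actually uses $4t<v$, to absorb the $1/\log$ factor); the price is an appeal to the nontrivial existence of balanced Gray codes. You instead use the standard reflected code and the elementary count that its worst coordinate (the least significant bit) flips exactly $2^{k-1}$ times, so $V(f_j)\le 2t\cdot 2^{k-1}=2^k t\le v$ precisely because $2^k\le v/t$ when $k=\floor{\log_2(v/t)}$ — the variation budget is spent exactly, with no slack. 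Your route is therefore a bit more self-contained (no balanced-code citation needed) and your closing remark about why lexicographic order fails and Gray code suffices is accurate; the balanced code buys nothing extra for the bound as stated, since both arguments land on the same $\floor{\log_2(v/t)}$ (its slack could only matter if one wanted to squeeze out an additive $\log\log$ improvement). Your observation that $4t<v$ is not really needed for your version of the argument is also correct.
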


\begin{proof}[Proof of Lemma~\ref{lem:bv-ub}]
  Let $\set{f_1,\ldots,f_n}\subset \F$ be a set of functions that are $\gammat$-shattered by $\F^*$.
  In other words, there is an $r\in\R^n$
  such that for each
%
  $b \in \{0,1\}^n$
there is an $x_b
\in\F^*$
such that
\[\forall i \in [n] ,
  x_b(f_i)
  \begin{cases}
    \geq r_i + \gammat, & b_i = 1 \\
    \leq r_i - \gammat, & b_i = 0
  \end{cases}
  .
\]

Let us order the $x_b$s by
magnitude
$x_1<x_2<\ldots<x_{2^n}$,
denoting this sequence by
$(x_i)_{i=1}^{2^n}$.
Let $M \in \{0,1\}^{2^n \times n}$ be a matrix whose $i$th row is $b_j$,
the latter ordered arbitrarily.



By Lemma~\ref{lemma:1}, there is $i \in [n]$ s.t. 
\[\sum_{j=1}^{2^n} \ind[M(j,i) \neq M(j+1,i)] \geq \frac{2^n}{n}.\]
Note that if
$M(j,i) \neq M(j+1,i)$ shattering implies that
\[x_j(f_i) \geq r_i + \gammat \text{ and } x_{j+1}(f_i) \leq r_i - \gammat\]
or
\[x_j(f_i) \leq r_i - \gammat \text{ and } x_{j+1}(f_i) \geq r_i + \gammat;\]
either way,
\[\abs*{f_i(x_j) - f_i(x_{j+1})} = \abs*{x_j(f_i) - x_{j+1}(f_i)} \geq 2\gammat.\]
So
for the function
$f_i$,
we have
\beq
  \sum_{j=1}^{2^n} \abs*{f_i(x_j) - f_i(x_{j+1})} = \sum_{j=1}^{2^n} \abs*{x_j(f_i) - x_{j+1}(f_i)} 
  \geq \sum_{j=1}^{2^n} \ind[b_{j_i} \neq b_{{j+1}_i} \cdot 2\gammat 
  \geq \frac{2^n}{n} \cdot 2\gammat.
\eeq
As $\{x_j\}_{j=1}^{2^n}$ is a partition of $[0,1]$ we get
\[v \geq \sum_{j=1}^{2^n} \abs*{f_i(x_j) - f_i(x_{j+1})} \geq  \frac{\gammat 2^{n+1}}{n} \geq \gammat 2^{n/2}\]
and hence
\[v/\gammat \geq 2^{n/2}\]
\[\Rightarrow 2\log_2(v/\gammat) \geq n.\]
\end{proof}

\begin{proof}[Proof of Lemma~\ref{lem:bv-lb}]
  We construct a set of $n = \floor{\log_2(v/\gammat)}$ functions that are $t$-shattered by $\F^*$.
  First, we build a balanced Gray code \citep{flahive2007balancing} with $n$ bits,
which we arrange into the rows of $M$.
Divide the unit interval into $2^n
$ segments and define, for each $j\in[2^n]$,
\[
x_j := \frac{j}{2^n}
.
\]
  Define the functions $f_1,\ldots,,f_{\floor{\log_2(v/\gammat)}}$ as follows:
  \[f_i(x_j) =
    \begin{cases}
      \gammat, &
      M(j,i)
      = 1
      \\
      -\gammat, &
      M(j,i)
      = 0
    \end{cases}.
  \]
We claim that each
$
f_i \in \F$.
Since
$M$ is balanced Gray code,
\[V(M) = \frac{2^n}{n} \le \frac{v}{\gammat \log_2(v/\gammat)} \leq \frac{v}{2\gammat}.\]
Hence, for each
$f_i$,
we have
  \[V(f_i) \leq 2 \gammat V(M,i) \leq 2 \gammat \frac{v}{2\gammat} = .v\]
Next, we show
that this set is shattered by $\F^*$.
Fix the trivial offest $r_1=...=r_n = 0$
For every $b \in \{0,1\}^n$ there is a $j \in [2^n]$ s.t. $b = b_i$.
By construction,
for every
$i \in [n]$,
we have
  \[x_j(f_i) = f_i(x_j) =
    \begin{cases}
      \gammat \geq r_i + \gammat, & M(j,i)
              = 1 \\
              -\gammat \leq r_i - \gammat, & M(j,i)
              = 0
    \end{cases}.
  \]
\end{proof}



\section{Sample compression for nearest-neighbor regression}
\label{sec:NN}
Let $(\X,\rho)$ be a metric space
and define, for $L\ge0$, the collection $\F_L$ of all $f:\X\to[0,1]$ satisfying
$$ |f(x)-f(x')|\le L\rho(x,x');$$
these are the $L$-Lipschitz functions.
\citet{GottliebKK17_IEEE} showed that
\beq
d_{\F_L}(t) = O\paren{ \ceil{L\diam(X)/t}^{\ddim(\X)}},
\eeq
where $\diam(\X)$ is the diameter and $\ddim$ is the {\em doubling dimension}, defined therein.
The proof is achieved via a packing argument, which also shows that the estimate is tight.
Below we show that
$  d^*_{\F_L}(t) =
\Theta(\log\left({M}(\X,2\gammat/L)\right))$,
where $M(\X,\cdot)$ is the packing number of $(\X,\rho)$.
Applying this to the efficient
nearest-neighbor regressor\footnote{
  In fact, the technical machinery in
  \citet{DBLP:journals/tit/GottliebKK17}
  was aimed at achieving {\em approximate} Lipschitz-extension,
  so as to gain a considerable runtime speedup. An {\em exact} Lipschitz extension
  is much simpler to achieve. It is more computationally costly but still polynomial-time in sample size.
  }
of
\citet{DBLP:journals/tit/GottliebKK17},
we obtain
\begin{corollary}
  Let $(\X,\rho)$ be a metric space with hypothesis class $\F_L$,
  and let $\calL$ be a consistent, proper learner for $\F_L$ with target generalization error $\eps$.
Then $\calL$ admits a compression scheme of size $O(k\log k)$, where
  \[k = \bigO{ D(\eps) \log \uf{\eps} \cdot \log D(\eps)
      \log \left(\uf{\eps} \log D(\eps) \right) }\]
%
and
\[D(\eps) = \ceil{\frac{L\diam(\X)}{\eps}}^{\ddim(\X)}.\]
\hide{
  If we fix $diam(\mathcal{X}) = 1$ and $L = 1$ we get that
  \[k \in \bigO{ \frac{d}{\eps^d} \log^2 \uf{\eps}
      \log \left(\frac{d}{\eps} \log \uf{\eps}  \right) }\]
  and
  \[
    \bigO{ \frac{n}{\eps^d} \log \uf{\eps}
      \left(
        \log n + d \log \uf{\eps} \log 
        \left(
          \frac{d}{\eps} \log \uf{\eps}
        \right)
      \right) }
  \]

  for $d := ddpim(\mathcal{X})$
 } 
\end{corollary}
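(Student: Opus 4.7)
The plan is to apply Theorem~\ref{thm:regression} directly to $\F_{L}$ with the given learner $\calL$, specializing the generic bounds by substituting estimates on the primal and dual fat-shattering dimensions of the Lipschitz class. Since $\calL$ is consistent and proper, on any sample it returns some $\hat f \in \F_L$ with zero empirical error, so it satisfies the ERM hypothesis of Theorem~\ref{thm:regression} verbatim.

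For the primal dimension, the stated bound $d_{\F_L}(t) = O(\ceil{L\diam(\X)/t}^{\ddim(\X)})$ of \citet{GottliebKK17_IEEE} specializes at the relevant scale $t = c\eps$ to give $d_{\F_L}(c\eps) = O(D(\eps))$.

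The main new step is the dual estimate $d^*_{\F_L}(t) = O(\log M(\X, 2t/L))$. Suppose $\{f_1,\ldots,f_n\} \subset \F_L$ is $t$-shattered by $\F^*_L$ with witness offset $r \in \R^n$; then for each $b \in \{0,1\}^n$ there is a point $x_b \in \X$ with $f_i(x_b) \ge r_i + t$ when $b_i=1$ and $f_i(x_b) \le r_i - t$ when $b_i=0$. For any distinct $b, b'$, choosing a coordinate $i$ where they disagree yields $|f_i(x_b) - f_i(x_{b'})| \ge 2t$, which by $L$-Lipschitz continuity of $f_i$ forces $\rho(x_b, x_{b'}) \ge 2t/L$. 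Hence $\{x_b : b \in \{0,1\}^n\}$ is a $(2t/L)$-separated set of size $2^n$, so $2^n \le M(\X, 2t/L)$. The standard packing bound $M(\X, r) \le O((\diam(\X)/r)^{\ddim(\X)})$ for doubling spaces then gives $d^*_{\F_L}(c\eps) = O(\ddim(\X) \log(L\diam(\X)/\eps)) = O(\log D(\eps))$.

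The final step is bookkeeping. Substituting $d_{\F_L}(c\eps) = O(D(\eps))$ and $d^*_{\F_L}(c\eps) = O(\log D(\eps))$ into Theorem~\ref{thm:regression} yields $\tilde m = O(D(\eps) \log(1/\eps))$ and, for the theorem's internal parameter (call it $k_0$, to avoid a clash with the corollary's $k$), $k_0 = O(\log D(\eps) \log((1/\eps) \log D(\eps)))$. The compression size $O(k_0 \tilde m \log(k_0 \tilde m))$ is then precisely $O(k \log k)$ for $k := k_0 \tilde m$, which multiplies out to the claimed expression. The only step with any technical content is the dual fat-shattering estimate, and even that reduces to a one-line packing argument combined with the doubling-dimension covering bound; the rest is routine substitution.
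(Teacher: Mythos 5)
Your proposal is correct and takes essentially the same route as the paper: your separation argument giving $d^*_{\F_L}(t) \le \log_2 M(\X, 2t/L)$ is exactly the paper's dual fat-shattering upper-bound lemma, and the remainder is the same substitution of the Gottlieb--Kontorovich--Krauthgamer primal bound and the packing/doubling-dimension estimate into Theorem~\ref{thm:regression}. The only (harmless) difference is that you skip the matching lower bound on $d^*_{\F_L}$, which the paper proves to show tightness but never uses in deriving the corollary.
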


\hide{
Recall that
For a metric space $(\mathcal {X}, \rho)$
Let $F_L(\mathcal{X})$ be the class of real-valued functions $f:\mathcal{X} \to [0,1]$ who are L-Lipschitz.
and let $F_L^*$ be the dual function class, namely, $[0,1]$ equipped with the evaluation defined by
\[\forall x \in \mathcal{X},f \in F_L : x(f) := f(x)\]

Lipschitz function are useful in on general metric spaces as they provide the most natural
regularized function class, when the Lipschitz-parameter L is used as the regularizator.
}

\hide{
As before, in order to apply our framework we first need to bound the dual-dimension
\begin{lemma}\label{lem:lip-vc}
  For a metric space $(\mathcal {X}, \rho)$
  Let $\F_L(\mathcal{X})$ be the class of real-valued functions $f:\mathcal{X} \to [0,1]$ who are L-Lipschitz.
  and let $\F_L^*$ be the dual function class, namely, $[0,1]$ equipped with the evaluation defined by
  \[\forall x \in \mathcal{X},f \in \F_L : x(f) := f(x)\] we get that
  \[VC(\F_L^*) = log_2\left(\mathcal{M}(\mathcal{X},2\gammat/L)\right)\]
\end{lemma}

We split the proof into two lemmas
}

We now prove our estimate on the dual fat-shattering dimension of $\F$:

\begin{lemma}
  For $\F=\F_L$,
  $d^*_\F(t)
  \leq \log_2\left(\mathcal{M}(\mathcal{X},2\gammat/L)\right)$.
\end{lemma}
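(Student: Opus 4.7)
The plan is to use a direct packing argument on the shattering witnesses. Suppose $\F^*$ $t$-shatters the set $\{f_1,\ldots,f_n\}\subset\F_L$. By definition of $t$-shattering of this size-$n$ set, there exist an offset vector $r\in\R^n$ and a family of $2^n$ points $\{x_b : b\in\{0,1\}^n\}\subset\X$ such that for every $b\in\{0,1\}^n$ and every $i\in[n]$,
\[
f_i(x_b) \geq r_i + t \text{ if } b_i = 1, \qquad f_i(x_b) \leq r_i - t \text{ if } b_i = 0.
\]
In particular, the $2^n$ witnessing points $x_b$ are distinct, since any two differ on the value of some $f_i$.

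The key observation is that these $2^n$ points form a $(2t/L)$-packing of $(\X,\rho)$. Indeed, fix any two distinct indices $b\neq b'$ in $\{0,1\}^n$. Pick some coordinate $i$ where they disagree, say $b_i=1$ and $b'_i=0$. Then
\[
f_i(x_b) - f_i(x_{b'}) \geq (r_i+t) - (r_i-t) = 2t,
\]
while the $L$-Lipschitz property of $f_i$ yields
\[
|f_i(x_b) - f_i(x_{b'})| \leq L\,\rho(x_b, x_{b'}).
\]
Combining the two bounds gives $\rho(x_b,x_{b'}) \geq 2t/L$.

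Consequently, $(\X,\rho)$ contains a $(2t/L)$-separated set of cardinality $2^n$, so $\mathcal{M}(\X,2t/L) \geq 2^n$, and taking $\log_2$ yields $n \leq \log_2 \mathcal{M}(\X,2t/L)$, which is the desired bound on $d^*_\F(t)$. There is no real obstacle here: the whole argument is just the triangle-free separation forced by $t$-shattering together with Lipschitz continuity, and no care is needed beyond picking any coordinate where two binary strings disagree.
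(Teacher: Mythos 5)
Your argument is correct and matches the paper's proof essentially step for step: both take the $2^n$ shattering witnesses $x_b$, use a coordinate where $b$ and $b'$ disagree together with the $L$-Lipschitz property to force $\rho(x_b,x_{b'}) \geq 2t/L$, and conclude $2^n \leq \mathcal{M}(\X, 2t/L)$. No gaps to report.
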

\begin{proof}
  Let $\set{f_1,\ldots,f_n}\subset \F_L$ a set that is $\gammat$-shattered by $\F^*_L$.
  For $b \neq b' \in \{0,1\}^n$,
  let $i$ be the first index
  for which $b_i \neq b'_i$,
  say, $b_i = 1\neq0= b'$.
  By shattering,
  there are points $x_b,x_{b'} \in \F^*_L$ such that
  $x_b(f_i) \geq r_i + \gammat$ and
  $x_{b'}(f_i) \leq r_i - \gammat$,
  whence
  \[f_i(x_b) - f_i(x_{b'}) \geq 2\gammat\]
  and
  \[L  \rho(x_b,x_{b'}) \geq  f_i(x_b) - f_i(x_{b'}) \geq 2\gammat.\]
  It follows that for
  $b \neq b' \in \{0,1\}^n$,
we have
$ \rho(x_b,x_{b'}) \geq 2\gammat / L$.
  Denoting by ${M}(\X, \eps)$ the $\eps$-packing number of $\X$,
 we get
  \[2^n = |\{x_b \mid b \in \{0,1\}^n\}| \leq \mathcal{M}(\mathcal{X}, 2\gammat / L). \]
\end{proof}

\begin{lemma}
  For $\F=\F_L$ and $t<L$,
  $d^*_\F(t)  \geq \log_2\left(\mathcal{M}(\mathcal{X},2\gammat/L)\right)$.
\end{lemma}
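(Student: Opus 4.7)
The plan is to construct, from a $2t/L$-packing of $\X$, an explicit family of $n = \floor{\log_2 \mathcal{M}(\X, 2t/L)}$ Lipschitz functions that are $t$-shattered by $\F_L^*$, using the offset $r_i = 1/2$ for each $i$. We may assume $t \le 1/2$, since otherwise the target bound is trivially $0$ (any two values in $[0,1]$ differ by at most $1 < 2t$, so shattering with margin $t$ is impossible).

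First, I would fix a maximal $2t/L$-packing of $\X$ of cardinality $N = \mathcal{M}(\X, 2t/L)$, select any $2^n \le N$ of its points, and index them by bit strings: $\{x_b : b \in \{0,1\}^n\}$. By the packing property, $\rho(x_b,x_{b'}) \ge 2t/L$ whenever $b \ne b'$. Next, for each coordinate $i \in [n]$, I would define $f_i$ on the packing set by
\[
f_i(x_b) \;=\; \begin{cases} 1/2 + t, & b_i = 1,\\ 1/2 - t, & b_i = 0. \end{cases}
\]
Since $|f_i(x_b) - f_i(x_{b'})| \in \{0, 2t\}$ and $L\rho(x_b,x_{b'}) \ge 2t$, the map $f_i$ is $L$-Lipschitz on the finite packing. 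I would then invoke McShane's extension theorem to extend $f_i$ to an $L$-Lipschitz function on all of $\X$, and finally compose with the clipping map $z \mapsto \min\{1,\max\{0,z\}\}$, which is $1$-Lipschitz, preserves the values at the packing points (they already lie in $[0,1]$ thanks to $t \le 1/2$), and keeps the overall Lipschitz constant at $L$. This places each $f_i$ in $\F_L$.

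With the offsets $r_1 = \cdots = r_n = 1/2$, every $b \in \{0,1\}^n$ furnishes a witness $x_b$ (viewed as an element of the dual class $\F_L^*$) with $x_b(f_i) = f_i(x_b) \ge r_i + t$ when $b_i = 1$ and $x_b(f_i) \le r_i - t$ when $b_i = 0$. Hence $\{f_1,\ldots,f_n\}$ is $t$-shattered by $\F_L^*$, so $d^*_\F(t) \ge n = \floor{\log_2 \mathcal{M}(\X, 2t/L)}$, matching the upper bound of the previous lemma up to the floor.

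The main obstacle is really just the engineering of the extension to obey the range constraint $[0,1]$: a raw McShane extension of values in $\{1/2 \pm t\}$ need not stay in $[0,1]$, and one must be sure that clipping does not destroy the prescribed values at the packing points. Because those values already lie in $[0,1]$, clipping is harmless on the packing and the shattering witnesses survive; everything else is a direct translation of the packing condition into the fat-shattering definition.
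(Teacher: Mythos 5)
Your proof follows essentially the same route as the paper's: build a maximal $2t/L$-packing, index its points by bit strings, define $f_i$ on the packing by the $i$-th bit, and read off a $t$-shattered set of size $\floor{\log_2 \mathcal{M}(\X,2t/L)}$. However, your version is noticeably more careful on two points that the paper leaves implicit (or outright gets wrong). First, the paper assigns values $\pm t$ to $f_i$ at the packing points, which violates the stated range constraint $f:\X\to[0,1]$ in the definition of $\F_L$; your shift to $1/2 \pm t$ together with the offset $r_i = 1/2$ repairs this. Second, the paper defines $f_i$ only on the finite packing $S$ and simply asserts $f_i\in\F_L$, skipping the extension to all of $\X$; your invocation of McShane's extension theorem followed by $1$-Lipschitz clipping to $[0,1]$ is the standard way to make that step airtight, and you correctly observe that clipping does not disturb the prescribed values at the packing points. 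You also correctly flag the implicit floor in the bound, which the paper's own proof establishes even though the lemma statement omits it.

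One small misstep in your preamble: you dispose of the case $t > 1/2$ by claiming "the target bound is trivially $0$." The argument you give (two values in $[0,1]$ cannot differ by more than $1 < 2t$) shows that the \emph{left-hand side} $d^*_\F(t)$ is $0$, not that the right-hand side $\log_2\mathcal{M}(\X,2t/L)$ is $\le 0$. When $L$ is large and $\X$ has nontrivial diameter, one can have $t>1/2$, $t<L$, and $\mathcal{M}(\X,2t/L)\ge 2$, in which case the claimed inequality actually fails. So the correct reading is that the lemma tacitly requires $t\le 1/2$ (as does the paper's proof, which has the same range defect), not that the case $t>1/2$ is vacuously true. This is a harmless slip in exposition rather than a gap in your construction, since your construction is only invoked under $t\le 1/2$.
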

\begin{proof}
  Let $S=\{x_1,...,x_m\} \subseteq \X$
  be a maximal
  $2\gammat/L$-packing
of $\X$.
Suppose that
$c: S \to \{0,1\}^{\floor{\log_2m}}$
is one-to-one.
Define the set of function $F = \{f_1,\ldots,f_{\floor{\log_2(m)}}\} \subseteq \F_L$ by
  \[
    f_i(x_j) =
    \begin{cases}
      \gammat, & c(x_j)_i = 1 \\
      -\gammat, & c(x_j)_i = 0
    \end{cases}.
  \]
  For every $f \in F$ and every two points $x,x' \in S$ it holds that
  \[\abs{f(x) - f(x')} \leq 2\gammat = L \cdot 2\gammat / L \leq L  \rho(x,x').\]
  This set of functions is $\gammat$-shattered by $S$
  and is of size
  $\floor{\log_2m} = \floor{\log_2\left(\mathcal{M}(\mathcal{X},2\gammat/L)\right)}$.
\end{proof}

\subsubsection*{Acknowledgments}
We thank Shay Moran and Roi Livni for insightful conversations.

\bibliographystyle{plainnat}
\bibliography{general}

\end{document}